\definecolor{Gray}{gray}{.9}
\newcolumntype{g}{>{\columncolor{Gray}}c}
\definecolor{codegreen}{rgb}{0,0.6,0}
\definecolor{codegray}{rgb}{0.5,0.5,0.5}
\definecolor{codepurple}{rgb}{0.58,0,0.82}
\definecolor{backcolour}{rgb}{0.95,0.95,0.92}
\lstdefinestyle{mystyle}{
    backgroundcolor=\color{backcolour},   
    commentstyle=\color{codegreen},
    keywordstyle=\color{magenta},
    numberstyle=\tiny\color{codegray},
    stringstyle=\color{codepurple},
    basicstyle=\ttfamily\footnotesize,
    breakatwhitespace=false,         
    breaklines=true,                 
    captionpos=b,                    
    keepspaces=true,                 
    numbers=left,                    
    numbersep=5pt,                  
    showspaces=false,                
    showstringspaces=false,
    showtabs=false,                  
    tabsize=2
}
\DeclarePairedDelimiter{\ceil}{\lceil}{\rceil}
\title{Combating Adversaries with Anti-Adversaries}
\author {
    Motasem Alfarra\textsuperscript{\rm 1},
    Juan C. P\'erez\textsuperscript{\rm 1},
    Ali Thabet\textsuperscript{\rm 2}, 
    Adel Bibi\textsuperscript{\rm 3}, \\
    Philip H.S. Torr\textsuperscript{\rm 3}, and
    Bernard Ghanem\textsuperscript{\rm 1}
}
\newtheorem{theorem}{Theorem}
\newtheorem{proposition}{Proposition}
\newtheorem{corollary}{Corollary}
\newtheorem{definition}{Definition}
\newtheorem{theo}{Theorem}
\newtheorem{prop}{Proposition}
\newtheorem{cor}{Corollary}
\newcommand{\argmax}{\text{arg}\max}
\newcommand{\argmin}{\text{arg}\min}
\newcommand{\eg}{\emph{e.g.~}}
\newcommand{\ie}{\emph{i.e.~}}
\newcommand{\etc}{\emph{etc.}}
\begin{document}

\maketitle

\begin{abstract}
    Deep neural networks are vulnerable to small input perturbations known as adversarial attacks. Inspired by the fact that these adversaries are constructed by iteratively minimizing the confidence of a network for the true class label, we propose the anti-adversary layer, aimed at countering this effect. In particular, our layer generates an input perturbation in the opposite direction of the adversarial one and feeds the classifier a perturbed version of the input. Our approach is training-free and theoretically supported. We verify the effectiveness of our approach by combining our layer with both nominally and robustly trained models and conduct large-scale experiments from black-box to adaptive attacks on CIFAR10, CIFAR100, and ImageNet. Our layer significantly enhances model robustness while coming at no cost on clean accuracy.\footnote{Official code: https://github.com/MotasemAlfarra/Combating-Adversaries-with-Anti-Adversaries}
\end{abstract}

\section{Introduction}\label{introduction}
Deep Neural Networks (DNNs) are vulnerable to small input perturbations known as adversarial attacks \cite{szegedy2013intriguing, goodfellow2014explaining}. In particular, a classifier $f$, which correctly classifies $x$, can be fooled by a small adversarial perturbation $\delta$ into misclassifying $(x+\delta)$ even though $x$ and $(x+\delta)$ are indistinguishable to the human eye. Such perturbations can compromise trust in DNNs, hindering their use in safety- and security-critical applications, \eg self-driving cars \cite{sitawarin2018darts}. 
\begin{figure}
    \centering
    \includegraphics[width=0.325\textwidth]{ 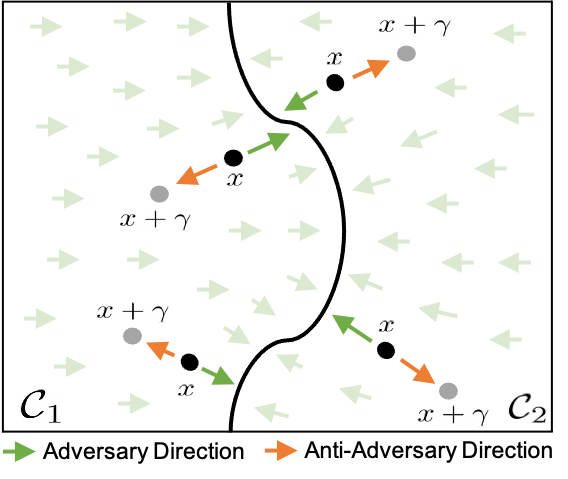}
    \caption{\textbf{Anti-adversary classifier.} The flow field of adversarial perturbations is shown in light green for both classes $\mathcal{C}_1$ and $\mathcal{C}_2$. The anti-adversary we construct pulls a given point $x$ to $(x+\gamma)$ by moving in the direction \textit{opposite} to that of the adversary flow field (orange arrows). 
    }
    \label{fig:pull}
\end{figure}
While there have been extensive efforts aimed at training DNNs that are robust to adversarial attacks, assessing the robustness of defenses remains an elusive task. This difficulty is due to the following reasons. (\textbf{i}) The robustness of models varies according to the information an attacker is assumed to know, \eg training data, gradients, logits, \etc, which, for ease, dichotomously categorizes adversaries as being black- or white-box. Consequently, this categorization results in difficulties when comparing defenses tailored to a specific type of adversaries. For instance, several defenses crafted for robustness against white-box adversaries were later broken by their weaker black-box counterparts~\cite{practical, brendel2018decision}. (\textbf{ii}) In addition, empirically-evaluated robustness can be overestimated if fewer efforts are invested into \textit{adaptively} constructing a stronger attack~\cite{adaptive, carlini2019evaluating}. The lack of reliable assessments has been responsible for a false sense of security, as several thought-to-be-strong defenses against white-box adversaries were later broken with better carefully-crafted adaptive attacks~\cite{athalye2018obfuscated}. The few defenses that have stood the test of time usually come at the expense of costly training and performance degradation on clean samples \cite{tsipras2018robustness}. Even worse, while most of these defenses are meant to resist white-box attacks, little effort has been invested into resisting the black-box counterparts, which may be more common and practical~\cite{snd}, as online APIs such as IBM Watson and Azure tend to abstain from disclosing information about the inner workings of their models.

In this work, we propose a simple, generic, and training-free layer that improves the robustness of both nominally- and robustly-trained models. Specifically, given a base classifier $f : \mathbb{R}^n \rightarrow \mathcal{Y}$, which maps $\mathbb{R}^n$ to labels in the set $\mathcal{Y}$, and an input $x$, our layer constructs a data- and model-dependent perturbation $\gamma$ in the \textit{anti-adversary} direction, \ie the direction that maximizes the base classifier's confidence on the pseudo-label $f(x)$, as illustrated in Figure \ref{fig:pull}. The new sample $(x+\gamma)$ is then fed to the base classifier $f$ in lieu of $x$. We dub this complete approach as the \textit{anti-adversary} classifier $g$. By conducting an extensive robustness assessment of our classifier $g$ on several datasets and under the full spectrum of attacks, from black-box --arguably the most realistic-- and white-box, to adaptive attacks, we find across-the-board improvements in robustness over all base classifiers $f$.

\noindent \textbf{Contributions.} (\textbf{i}) We propose an anti-adversary layer to improve the adversarial robustness of base classifiers. Our proposed layer comes at marginal computational overhead and virtually no impact on clean accuracy. Moreover, we provide theoretical insights into the robustness enhancement that our layer delivers. (\textbf{ii}) We demonstrate empirically under black-box attacks that our layer positively interacts with both nominally trained and state-of-the-art robust models, \eg TRADES~\cite{trades}, ImageNet-Pre \cite{pretraining}, MART \cite{mart}, HYDRA \cite{hydra}, and AWP \cite{awp}, on CIFAR10, CIFAR100 \cite{cifars} and ImageNet \cite{krizhevsky2012imagenet}. Our results show that the anti-adversary layer not only improves robustness against a variety of black-box attacks \cite{ilyas2018prior, ilyas2018black, andriushchenko2020square}, but also that this improvement comes at no cost on clean accuracy and does not require retraining. (\textbf{iii}) We further evaluate our approach on a challenging setting, in which the attacker is granted full access to the anti-adversary classifier, \ie white-box attacks. Under this setup, we equip the five aforementioned defenses with our classifier and test them under the strong AutoAttack benchmark \cite{autoattack}. Our experiments report across-the-board average improvements of 19\% and 11\% on CIFAR10 and CIFAR100, respectively.

\section{Related Work}
\textbf{Adversarial Attacks.} Evaluating network robustness dates back to the works of \cite{szegedy2013intriguing, goodfellow2014explaining}, where it was shown that small input perturbations, dubbed as adversarial attacks, can change network predictions. Follow-up methods present a variety of ways to construct such attacks, which are generally categorized as black-box, white-box and adaptive attacks. Black-box attackers either carry out zeroth order optimization to maximize a suitably defined loss function \cite{simba, pmlr-v80-uesato18a}, or learn offline adversaries that transfer well across networks \cite{transfer_attack, Bhagoji_2018_ECCV}. On the other hand, and less practical, white-box attackers are assumed to have the full knowledge of the network, \eg parameters, gradients, architecture, and training data, among others \cite{moosavi2016deepfool, madry2017towards}. Despite that, previously proposed attackers from this family often construct adversaries solely based on network predictions and gradients with respect to the input \cite{carlini2017towards, croce2020minimally}. Although this results in an overestimation of the worst-case robustness for networks, it has now become the \textit{de facto} standard for benchmarking robustness \cite{autoattack}. It was recently demonstrated that several networks, which were shown to be robust in the white-box setting, were susceptible to weaker black-box attacks \cite{dong2020benchmarking}. Consequently, there has been significant interest for \textit{adaptive attacks}, \ie specifically tailored adversaries exploiting complete knowledge of the network (not only predictions and gradients), for a reliable worst-case robustness assessment \cite{adaptive, athalye2018obfuscated}. However, while worst-case robustness is of interest through adaptive attacks, it may not be of practical relevance. We argue that a proper robustness evaluation should cover the full spectrum of attackers from black-box to adaptive attacks; thus, in this paper, we evaluate our method over such spectrum: black-box, white-box, and adaptive attacks. In particular, we use Bandits \cite{ilyas2018prior}, NES \cite{ilyas2018black} and Square \cite{andriushchenko2020square} for black-box attacks, AutoAttack \cite{autoattack} which ensembles the APGD, ADLR, FAB \cite{croce2020minimally}, and Square attacks for the white-box evaluation, and tailor an adaptive attack specific to our proposed approach for a worst-case robustness evaluation.

\textbf{Defenses Against Adversaries.}
Given the security concerns that adversarial vulnerability brings, a stream of works developed models that are not only accurate but also robust against adversarial attacks. From the black-box perspective, several defenses have shown their effectiveness in defending against such attacks \cite{PNI}. For example, injecting Gaussian noise into activation maps during both training and testing \cite{rse} was shown to successfully defend against a variety of black-box attacks \cite{dong2020benchmarking}. Moreover, SND \cite{snd} showed that small input perturbations can enhance the robustness of pretrained models against black-box attacks. However, the main drawback of randomized methods is that they can be bypassed by Expectation Over Transformation (EOT)~\cite{EOT}. Once an attacker accesses the gradients, \ie white-box attackers, the robust accuracy of such defenses drastically decreases. Thus, a stream of works built models that resist white-box attacks. While several approaches were proposed, such as regularization \cite{cisse2017parseval} and distillation \cite{distillation_papernot}, Adversarial Training (AT) \cite{madry2017towards} remains among the most effective. Moreover, recent works showed that AT can be enhanced by combining it with pretraining \cite{pretraining}, exploiting unlabeled data \cite{carmon2019unlabeled}, or concurrently, conducting transformations at test time~\cite{perez2021enhancing}. Further improvements were obtained by introducing regularizers, such as TRADES \cite{trades} and MART \cite{mart}, or combining AT with network pruning, as in HYDRA \cite{hydra}, or weight perturbations \cite{awp}.
While these methods improve the robustness, they require expensive training and degrade clean accuracy.
In this work, we show how our proposed anti-adversary layer enhances the performance of nominally trained models against realistic black-box attacks and even outperforms the strong SND defense. We show that equipping robust models with our anti-adversary layer significantly improves their robustness against black- and white-box attacks, in addition to showing worst-case robustness improvements under adaptive attacks.

\section{Methodology}
\noindent \textbf{Motivation.} Adversarial directions are the ones that maximize a loss function in the input, \ie move an input $x$ closer to the decision boundary, resulting in reducing the prediction's confidence on the correct label. In this work, we leverage this fact by prepending a layer to a trained model to generate a new input $(x+\gamma)$, which moves $x$ away from the decision boundary, thus hindering the capacity of attackers to successfully tailor adversaries. Before detailing our approach, we start with preliminaries and notations.

\begin{figure}
    \centering
    \includegraphics[width=\linewidth]{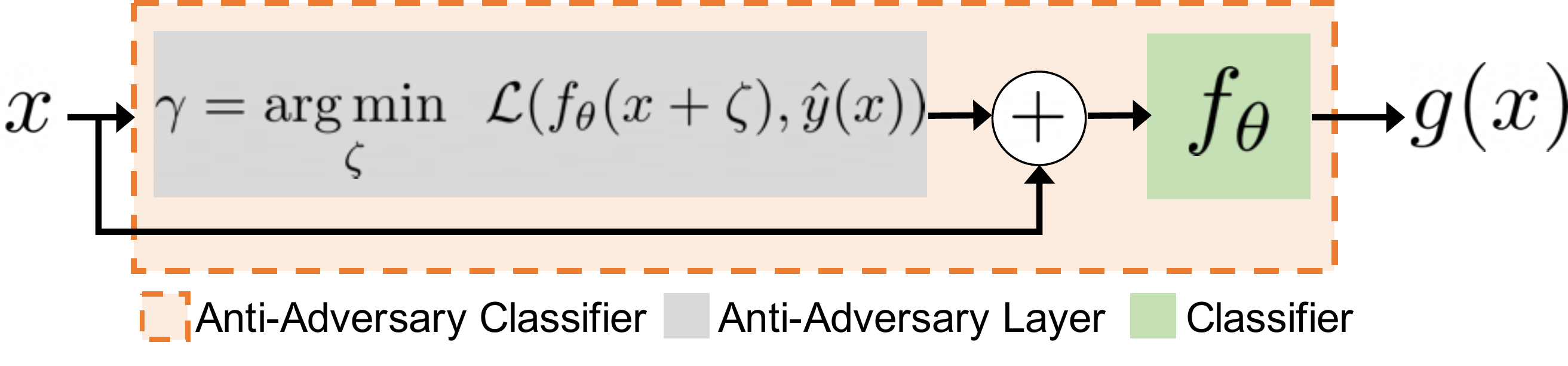}
    \caption{\textbf{The Anti-Adversary classifier.} Our anti-adversary layer generates $\gamma$ for each $x$ and $f_\theta$, and feeds $(x+\gamma)$ to $f_\theta$, resulting in our anti-adversary classifier $g$.}
    \label{fig:Algorithm_description}
\end{figure}

\subsection{Preliminaries and Notation} 
We use $f_\theta : \mathbb{R}^n \rightarrow \mathcal{P}(\mathcal{Y})$ to denote a classifier, \eg a neural network, parameterized by $\theta$, where $\mathcal{P}(\mathcal{Y})$ refers to the probability simplex over the set $\mathcal{Y} = \{1,2,\dots,k\}$ of $k$ labels. For an input $x$, an attacker constructs a small perturbation $\delta$ (\eg $\|\delta\|_p \leq \epsilon$) such that $\argmax_i f^i_\theta(x+\delta) \neq y$, where $y$ is the true label for $x$. In particular, one popular approach to constructing $\delta$ is by solving the following constrained problem with a suitable loss function $\mathcal{L}$:
\begin{equation}\label{eq:adversary}
 \max_{\delta} ~\, \mathcal{L}(f_\theta(x+\delta), y) \qquad \text{s.t.}\,\, \|\delta\|_p \leq \epsilon.
\end{equation}

Depending on the information about $f_\theta$ given to the attacker when solving Problem \eqref{eq:adversary}, the adversary $\delta$ can generally be categorized into one of three types. (\textbf{i}) \textbf{Black-box:} Only function evaluations $f_\theta$ are available when solving \eqref{eq:adversary}. (\textbf{ii}) \textbf{White-box:} Full access of the classifier $f_\theta$, e.g. $\nabla_x f_\theta$, is granted when solving \eqref{eq:adversary}. (\textbf{iii}) \textbf{Adaptive:} The attacker is tailored specifically to break the classifier $f_\theta$. That is to say, unlike white-box attacks that can be generic methods for all defenses, adaptive attacks are handcrafted to break specific defenses with full knowledge of $f_\theta$, providing a better assessment of worst-case robustness for the classifier $f_\theta$.

\subsection{Anti-Adversary Layer}
Analogous to the procedure used for constructing an adversary by solving \eqref{eq:adversary}, we propose, given a classifier, to prepend a layer that perturbs its input so as to maximize the classifier's prediction confidence at this input, hence the term \textit{anti-adversary}. Formally, given a classifier $f_\theta$, our proposed anti-adversary classifier $g$ (prepending $f_\theta$ with an anti-adversary layer) is given as follows:

\begin{equation}
\begin{aligned} \label{eq:anti_adv_layer}
    & g(x) = f_\theta(x+\gamma), \\
    & \text{s.t.} ~~ \gamma = \underset{\zeta}{\argmin} ~~\mathcal{L}(f_\theta(x+\zeta), \hat{y}(x)),
\end{aligned}
\end{equation}
\noindent where $\hat{y}(x) = \argmax_i f^i_\theta(x)$ is the predicted label. Note that our proposed anti-adversary classifier $g$ is agnostic to the choice of $f_\theta$. Moreover, it does not require retraining $f_\theta$, unlike previous works \cite{xie2018mitigating, snd} that add random perturbations to the input, further hurting clean accuracy. This is because instances that are correctly classified by $f_\theta$, \ie instances where $y = \argmax_i f^i_\theta(x)$, will be (by construction as per optimization \eqref{eq:anti_adv_layer}) classified correctly by $g$. As such, our anti-adversary layer only increases the confidence of the top prediction of $f_\theta(x)$. Also, we observe that our novel layer aligns with the recent advances in deep declarative models \cite{declarative, optnet, neural_ode, bibi2019deep}, where the output activations of a given layer are solutions to optimization problems or differential equations. We illustrate our approach in Figure \ref{fig:Algorithm_description}.

\subsection{Theoretical Motivation for Robustness}\label{sec:theory_moti}

Since the anti-adversary classifier $g$ perturbs inputs towards locations far from decision boundaries, we argue that $g$ can theoretically enjoy better robustness than $f_\theta$. In particular, we study robustness under the realistic black-box adversary setting of solving the unconstrained version of Problem \eqref{eq:adversary}. We analyze the robustness of both $g$ and $f_\theta$ under the celebrated SimBA attack \cite{simba} due to its simplicity and popularity. We show that SimBA requires a larger number of queries (forward passes) to fool $g$ than to fool  $f_\theta$, \ie $g$ is more robust than $f_\theta$. 
First, we show an equivalency between SimBA and Stochastic Three Points (STP) \cite{stp}, a recently proposed derivative-free optimization algorithm. All proofs are left for the \textbf{Appendix}.

\begin{proposition}\label{prop:simba-stp}
Let SimBA \cite{simba} with a budget of $2B$ queries select a random direction $q \in Q$, with replacement, thus updating the iterates $x^{k+1} \leftarrow x^k$ by selecting the direction among $\{\epsilon q, -\epsilon q\}$\footnote{Dropping the conditional break for loop, which is originally introduced in SimBA for computational reasons, in Algorithm (1) in \cite{simba} and evaluating on both $\pm \epsilon q$.} with the maximum $\mathcal{L}$. Then, SimBA is equivalent to STP with $B$ iterations.
\end{proposition}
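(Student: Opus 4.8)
The plan is to establish the equivalence by showing that one iteration of STP corresponds exactly to two queries of the modified SimBA procedure, so that $B$ STP iterations consume the stated budget of $2B$ queries. First I would recall the mechanics of both algorithms side by side. STP, at iterate $x^k$, samples a random direction $q$, evaluates the objective at the three points $x^k$, $x^k + \epsilon q$, and $x^k - \epsilon q$, and moves to whichever of these three yields the best objective value. SimBA, in the form described in Proposition \ref{prop:simba-stp} (with the conditional early-break removed and both signs $\pm\epsilon q$ evaluated), samples $q \in Q$ with replacement, evaluates $\mathcal{L}$ at $x^k + \epsilon q$ and $x^k - \epsilon q$, and updates $x^{k+1} \leftarrow x^k$ by selecting the direction among $\{\epsilon q, -\epsilon q\}$ that maximizes $\mathcal{L}$.

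Next I would align the objectives. SimBA seeks to \emph{maximize} the loss $\mathcal{L}$ (pushing $x$ toward the decision boundary), whereas STP is stated as a minimization routine; so the correspondence is between SimBA and STP applied to the objective $-\mathcal{L}$ (equivalently, STP maximizing $\mathcal{L}$). With this sign convention fixed, both methods draw the \emph{same} random direction $q$ at each step and probe the \emph{same} two perturbed points $x^k \pm \epsilon q$. The only structural gap is that STP also includes the incumbent point $x^k$ among its candidates, while SimBA always commits to one of the two perturbed directions. I would close this gap by arguing that keeping $x^k$ is never strictly better: one shows that selecting the best of $\{x^k+\epsilon q,\, x^k-\epsilon q\}$ cannot decrease the objective relative to $x^k$ in any way that changes the query-count bookkeeping, because each STP iteration consumes exactly the two function evaluations at $x^k\pm\epsilon q$ (the value at $x^k$ being cached from the previous iteration). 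Thus the per-iteration query cost is $2$, and $B$ iterations of STP cost $2B$ queries, matching the SimBA budget.

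The key steps in order are therefore: (1) write out the two update rules with matched notation; (2) reconcile the maximize-versus-minimize convention so the two procedures optimize the same objective over the same candidate set $\{x^k\pm\epsilon q\}$; (3) account for the incumbent term in STP, showing it introduces no extra queries and does not alter the selected direction whenever SimBA makes progress; and (4) tally queries to conclude that the $2B$-query SimBA run reproduces, step for step, the iterates of $B$-iteration STP.

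The main obstacle I expect is step (3): handling the incumbent point $x^k$ in STP's three-way comparison. STP may choose to stay at $x^k$ when neither perturbation improves the objective, whereas the described SimBA variant always moves to one of $\{\pm\epsilon q\}$. I would resolve this by a careful reading of how SimBA accumulates its perturbation vector—if a proposed step would worsen $\mathcal{L}$, the coefficient on that coordinate is effectively left unchanged, so the \emph{net} iterate matches STP's decision to retain $x^k$. Making this bookkeeping precise, and confirming that the cached value at $x^k$ means STP genuinely uses only two \emph{fresh} queries per iteration, is the crux that turns the informal "equivalence" into the exact query-count identity asserted in the proposition.
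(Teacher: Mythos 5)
Your proposal is correct and takes essentially the same route as the paper's proof: both rewrite the (break-free) SimBA update as a three-point comparison over $\{x^k,\ x^k+\epsilon q,\ x^k-\epsilon q\}$ — with the incumbent $x^k$ retained exactly when neither perturbed coordinate improves $\mathcal{L}$, which is your step (3) — and then count two fresh queries per iteration because $f_\theta(x^k)$ is cached from the previous step, giving $2B$ queries for $B$ STP iterations. The paper states this identification tersely, while you additionally make explicit the maximize-versus-minimize sign convention; the substance is the same.
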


Therefore, when $\mathcal{L}$ is $L$-smooth, \ie $\|\nabla_x \mathcal{L}(f_\theta(x+\delta),y) - \nabla_x \mathcal{L}(f_\theta(x),y)\| \leq L \|\delta\|$, we can find a lower bound for the number of queries $B$ required by SimBA to maximize $\mathcal{L}$ to a certain precision.

\begin{corollary}\label{cor:stp-convergence} Let $\mathcal{L}$ be $L$-smooth, bounded above by $\mathcal L(f_\theta(x^*),y)$, and the steps of SimBA satisfy $ 0 < \epsilon < \nicefrac{\rho}{nL}$ while sampling directions from the Cartesian canonical basis ($Q$ is an identity matrix here). Then, so long as:
\begin{align*}
    B > \frac{\mathcal L(f_\theta(x^*),y)-\mathcal{L}(f_\theta(x^0),y)}{(\frac{\rho}{n} - \frac{L}{2}\epsilon)\epsilon} = K_{f_\theta},
\end{align*}
we have that ~$\min_{k=1,2,\dots,B} ~\mathbb E\left[\|\nabla \mathcal L (f_\theta(x^{k}),y)\|_1\right] < \rho$.
\end{corollary}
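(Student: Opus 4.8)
The plan is to invoke Proposition \ref{prop:simba-stp} so that SimBA may be analyzed as STP, and then to run the standard direct-search convergence argument specialized to the canonical basis and to the fact that the attacker \emph{maximizes} $\mathcal{L}$. Throughout I abbreviate $\mathcal{L}_k := \mathcal{L}(f_\theta(x^k),y)$ and write $s_k = e_{i_k}$ for the canonical direction drawn uniformly at iteration $k$, so that $\|s_k\| = 1$.

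First I would establish a one-step ascent inequality. Since $\mathcal{L}$ is $L$-smooth, the quadratic bound gives $\mathcal{L}(x^k \pm \epsilon s_k) \geq \mathcal{L}_k \pm \epsilon\langle \nabla\mathcal{L}_k, s_k\rangle - \frac{L}{2}\epsilon^2\|s_k\|^2$. Because the STP update retains the best objective value among $x^k$, $x^k + \epsilon s_k$, and $x^k - \epsilon s_k$, the next iterate satisfies $\mathcal{L}_{k+1} \geq \mathcal{L}_k + \epsilon\,|\langle \nabla\mathcal{L}_k, s_k\rangle| - \frac{L}{2}\epsilon^2$. The key maneuver here is recognizing that selecting the better of the two signed steps converts the signed inner product into its absolute value, which is what makes progress unconditional on the sign of the directional derivative.

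Next I would condition on $x^k$ and take the expectation over the uniform choice of direction, using the identity $\mathbb{E}\,|\langle \nabla\mathcal{L}_k, e_{i_k}\rangle| = \tfrac{1}{n}\sum_i |\partial_i \mathcal{L}_k| = \tfrac{1}{n}\|\nabla\mathcal{L}_k\|_1$, which is precisely where the factor $n$ and the $\ell_1$ norm in the statement originate. The tower property then yields $\mathbb{E}[\mathcal{L}_{k+1}] \geq \mathbb{E}[\mathcal{L}_k] + \tfrac{\epsilon}{n}\,\mathbb{E}\|\nabla\mathcal{L}_k\|_1 - \tfrac{L}{2}\epsilon^2$. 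Telescoping this from $k = 0$ to $B-1$, invoking the upper bound $\mathbb{E}[\mathcal{L}_B] \leq \mathcal{L}(f_\theta(x^*),y)$, and replacing the sum of expected gradient norms by $B$ times its minimum gives $\tfrac{\epsilon}{n}\,B\,\min_k \mathbb{E}\|\nabla\mathcal{L}_k\|_1 \leq \mathcal{L}(f_\theta(x^*),y) - \mathcal{L}_0 + \tfrac{L}{2}\epsilon^2 B$.

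Finally, rearranging this inequality and enforcing $\min_k \mathbb{E}\|\nabla\mathcal{L}_k\|_1 < \rho$ produces exactly the claimed threshold $B > \left(\mathcal{L}(f_\theta(x^*),y) - \mathcal{L}_0\right)\big/\left[\left(\frac{\rho}{n} - \frac{L}{2}\epsilon\right)\epsilon\right] = K_{f_\theta}$; the hypothesis $0 < \epsilon < \rho/(nL)$ is what guarantees $\frac{\rho}{n} - \frac{L}{2}\epsilon > 0$, so that $K_{f_\theta}$ is finite and positive and the division preserves the inequality direction. I expect the only genuinely delicate point to be the bookkeeping between maximization and the sign conventions of the smoothness bound, namely obtaining the absolute-value term and the expectation identity correctly; once those are in place, the remainder is the routine telescoping-and-averaging step familiar from zeroth-order convergence proofs.
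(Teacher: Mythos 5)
Your proposal is correct, but it takes a more self-contained route than the paper. The paper's own proof is a two-line citation: it combines Proposition \ref{prop:simba-stp} (SimBA $\equiv$ STP) with Theorem 4.2 of the STP paper as a black box, plus the observation that $x > \ceil{x}-1$ to turn the ceiling in that theorem into the stated strict inequality on $B$. You instead unpack that black box and re-derive the STP rate from first principles: the $L$-smoothness lower bound on $\mathcal{L}(x^k \pm \epsilon s_k)$, the max-over-signs step that yields $\epsilon\,\lvert\langle\nabla\mathcal{L}_k, s_k\rangle\rvert - \tfrac{L}{2}\epsilon^2$ of guaranteed ascent, the identity $\mathbb{E}\,\lvert\langle\nabla\mathcal{L}_k, e_{i_k}\rangle\rvert = \tfrac{1}{n}\lVert\nabla\mathcal{L}_k\rVert_1$ for the uniform canonical basis, and the telescoping against the upper bound $\mathcal{L}(f_\theta(x^*),y)$. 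Rearranging does give exactly $K_{f_\theta}$, and your remark that $0<\epsilon<\nicefrac{\rho}{nL}$ is what keeps $\tfrac{\rho}{n}-\tfrac{L}{2}\epsilon$ positive is the right justification for the division. What your version buys is transparency (it makes explicit where the $\ell_1$ norm and the factor $n$ come from, which the paper leaves implicit in the citation) at the cost of length; the paper's version buys brevity but leans entirely on an external result. The only blemish is an off-by-one in the index set: your telescoping controls $\min_{k=0,\dots,B-1}\mathbb{E}\lVert\nabla\mathcal{L}_k\rVert_1$ whereas the statement ranges over $k=1,\dots,B$; this is a bookkeeping convention (shift the recursion by one step or run one extra iteration) that also lurks behind the paper's $\ceil{x}-1$ remark, and is worth a sentence but is not a substantive gap.
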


Corollary \ref{cor:stp-convergence} quantifies the minimum query budget $K_{f_{\theta}}$ required by SimBA to maximize $\mathcal{L}(f_\theta(x),y)$, reaching a specific solution precision $\rho$ measured in gradient norm. Note that SimBA requires $2$ queries (evaluating $f_\theta$ at $x^k\pm \epsilon q$) before sampling a new direction $q$ from $Q$; thus, with a budget of $2B$, SimBA performs a total of $B$ new updates to $x^k$ with iterates ranging from $k=1$ to $k=B$. To compare the robustness of $f_\theta$ to our anti-adversary classifier $g$ described in Eq. \eqref{eq:anti_adv_layer}, we derive $K_g$, \ie the minimum query budget necessary for SimBA to achieve a similar gradient norm precision $\rho$ when maximizing $\mathcal{L}(g(x),y)$. For ease of purposes, we analyze $K_g$ when the anti-adversary layer in $g$ solves the minimization Problem \eqref{eq:anti_adv_layer} with one iteration of STP with learning rate $\epsilon_g$. Next, we show that SimBA requires a larger query budget to maximize $\mathcal{L}(g(x),y)$ as opposed to $\mathcal{L}(f_\theta(x),y)$, hence implying that $g$ enjoys improved robustness.

\begin{theorem}\label{theo:improved-robustness-ratio}
Let the assumptions in Proposition \ref{prop:simba-stp} and Corollary \ref{cor:stp-convergence} hold. Then, the anti-adversary classifier $g$ described in Eq. \eqref{eq:anti_adv_layer}, where $\gamma$ is computed with a single STP update in the same direction $q$ as SimBA but with a learning rate $\epsilon_g = (1-c) \epsilon$ with $c<1$, is more robust against SimBA attacks than $f_\theta$. In particular, $\forall ~ c
\leq 0$, SimBA fails to construct adversaries for $g$ (\ie $K_g = \infty$). Moreover, for $c \in (0,1)$, the improved robustness factor for $g$ is:

\begin{equation}
    G(c) := \frac{K_g}{K_{f_\theta}} =  \frac{\frac{\rho}{n}-\frac{L\epsilon}{2}}{(\frac{\rho}{n} - \frac{L\epsilon}{2}c)c} > 1.
\end{equation}
\end{theorem}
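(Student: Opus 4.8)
The plan is to reduce the whole comparison to a single observation: attacking $g$ with SimBA is, after accounting for the anti-adversary response, equivalent to attacking $f_\theta$ with a \emph{shrunken effective step size} $c\epsilon$, so that $K_g$ follows from Corollary~\ref{cor:stp-convergence} simply by substituting $\epsilon \mapsto c\epsilon$. First I would invoke Proposition~\ref{prop:simba-stp} to view SimBA against $g$ as STP applied to $h(x) := \mathcal{L}(g(x),y) = \mathcal{L}(f_\theta(x+\gamma(x)),y)$. The central step is to characterize the single STP update defining $\gamma$: at any probed point the anti-adversary \emph{minimizes} $\mathcal{L}$ along the shared direction $q$, whereas SimBA \emph{maximizes} it, so the two select opposite signs. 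Writing $\psi(t) := \mathcal{L}(f_\theta(x^k+tq),y)$ and using $L$-smoothness together with the bound $\epsilon < \rho/(nL)$ to guarantee that $\psi$ is monotone over the relevant interval, the adversary probe $x^k+\epsilon q$ is met by the anti-adversary perturbation $-\epsilon_g q$, so the point actually seen by $f_\theta$ is $x^k + (\epsilon-\epsilon_g)q = x^k + c\epsilon q$, since $\epsilon_g=(1-c)\epsilon$.

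Having established that the perturbation reaching $f_\theta$ per step is $c\epsilon q$ rather than $\epsilon q$, I would conclude that SimBA against $g$ is exactly SimBA against $f_\theta$ run with learning rate $c\epsilon$. Substituting $\epsilon \mapsto c\epsilon$ into Corollary~\ref{cor:stp-convergence} gives
\begin{equation*}
K_g = \frac{\mathcal{L}(f_\theta(x^*),y)-\mathcal{L}(f_\theta(x^0),y)}{\left(\frac{\rho}{n}-\frac{L}{2}c\epsilon\right)c\epsilon}.
\end{equation*}
Forming the ratio $K_g/K_{f_\theta}$ then cancels the common numerator and one factor of $\epsilon$, leaving exactly the claimed $G(c)$.

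Next I would verify $G(c)>1$ for $c\in(0,1)$. Both numerator and denominator are positive under $\epsilon<\rho/(nL)$ (which forces $\tfrac{L\epsilon}{2}<\tfrac{\rho}{n}$), so the inequality is equivalent to $\tfrac{\rho}{n}-\tfrac{L\epsilon}{2} > \big(\tfrac{\rho}{n}-\tfrac{L\epsilon}{2}c\big)c$. Rearranging and factoring the difference yields $(1-c)\big[\tfrac{\rho}{n}-\tfrac{L\epsilon}{2}(1+c)\big]>0$; the first factor is positive since $c<1$, and the second because $\tfrac{L\epsilon}{2}(1+c)<L\epsilon<\tfrac{\rho}{n}$, again using $c<1$ and the step-size bound. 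For the degenerate regime $c\le 0$, I would argue directly: then $\epsilon_g=(1-c)\epsilon\ge\epsilon$, so the anti-adversary cancels (and for $c<0$ overcompensates) the adversarial step, making the effective displacement $c\epsilon q$ non-increasing in $\mathcal{L}$. SimBA can therefore never raise the loss above its baseline and never crosses the decision boundary, so $K_g=\infty$; consistently, the substitution produces a vanishing (at $c=0$) or negative (at $c<0$) denominator, signalling an unattainable budget.

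The main obstacle I anticipate is making the ``effective step $c\epsilon$'' reduction fully rigorous rather than heuristic: I must confirm that the nested minimization defining $\gamma$ \emph{consistently} selects the sign opposing SimBA (so the cancellation is exact, not merely approximate), and that the trajectory of effective inputs $x^k+\gamma(x^k)$ genuinely obeys STP dynamics on $\mathcal{L}(f_\theta(\cdot),y)$ with step $c\epsilon$, so that Corollary~\ref{cor:stp-convergence} applies verbatim. The monotonicity of $\psi$ guaranteed by $L$-smoothness and the smallness of $\epsilon$ is the lever that makes this reduction valid; pinning that argument down, and carefully tracking the sign conditions that separate the $c>0$ and $c\le 0$ regimes, is where the real work lies.
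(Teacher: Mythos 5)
Your overall route is the same as the paper's: reduce SimBA-vs-$g$ to SimBA-vs-$f_\theta$ with an effective step $c\epsilon$ (via sign-opposition of the nested $\argmin$ and the outer $\argmax$ along the shared direction $q$), substitute into Corollary~\ref{cor:stp-convergence} to get $K_g$, take the ratio, and split the cases $c\le 0$ versus $c\in(0,1)$. Your proof that $G(c)>1$ by factoring the difference as $(1-c)\bigl[\tfrac{\rho}{n}-\tfrac{L\epsilon}{2}(1+c)\bigr]>0$ is correct and arguably cleaner than the paper's argument, which instead computes $dG/dc<0$ and checks $\lim_{c\to 1}G(c)=1$.

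The one genuine gap is the step you yourself flag as the crux: you claim that $L$-smoothness together with $\epsilon<\rho/(nL)$ guarantees that $\psi(t)=\mathcal{L}(f_\theta(x^k+tq),y)$ is monotone on the relevant interval, so that the anti-adversary's $\argmin$ always selects the sign opposite to SimBA's $\argmax$. This does not follow. Smoothness gives $|\psi'(t)-\psi'(0)|\le L|t|$, so monotonicity on $[-\epsilon,\epsilon]$ would require $|\psi'(0)|=|\langle\nabla\mathcal{L},q\rangle|>L\epsilon$; but $q$ is a single canonical basis vector, and the corresponding coordinate of the gradient can be arbitrarily small (indeed zero) even while $\|\nabla\mathcal{L}\|_1\ge\rho$, and the latter is in any case only controlled in expectation over the iterates. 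The paper does not attempt to derive this monotonicity: in the appendix it restates the theorem with an additional explicit hypothesis that $\mathcal{L}$ is \emph{locally monotone} over $Q$ with radius $\nicefrac{\rho}{nL}$, and uses that assumption to conclude ``when $x_+=\argmin\{\cdot\}$ then $x_-'=\argmax\{\cdot\}$,'' which yields the iterate $x^{k+1}=x^k+(\epsilon-\epsilon_g)q_k=x^k+c\epsilon q_k$. So the cancellation you want is obtained by assumption, not by a smoothness argument; to make your write-up match a valid proof you should import that local-monotonicity hypothesis rather than try to prove it. Everything downstream of that point in your proposal (the $c\epsilon$ substitution, the $c\le 0$ case where the effective displacement is zero or loss-decreasing so $K_g=\infty$, and the ratio computation) agrees with the paper.
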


Theorem \ref{theo:improved-robustness-ratio} demonstrates that for any choice of $c < 1$, and under certain assumptions, $g$ is more robust than $f_\theta$ under SimBA attacks. In the case where the anti-adversary layer employs a larger learning rate $\epsilon_g$ than that of SimBA ($\epsilon$), \ie $c \leq 0$, then SimBA attacks will never alter the prediction of $g$, since $K_g = \infty$. On the other hand, when $\epsilon_g$ (the learning rate of the anti-adversary) is smaller than the learning rate of SimBA, \ie $c \in (0,1)$, SimBA will be successful in altering the prediction of $g$ but with a larger number of queries compared to $f_\theta$, that is, $g$ is more robust than $f_\theta$ under SimBA attacks. This outcome is captured by the improved robustness factor $G$, which is a strictly decreasing function in $c \in (0,1)$ and lower bounded by $1$.

\begin{algorithm}[t]
  \DontPrintSemicolon
  \SetKwFunction{FMain}{AntiAdversaryForward}
  \SetKwProg{Fn}{Function}{:}{}
  \Fn{\FMain{$f_\theta$, $x$, $\alpha$, $K$}}{
  \textbf{Initialize:} $\gamma^0 = 0$ \\
    $\hat{y}(x) =\argmax_i f^i_\theta(x)$ \\
     \For{$k = 0 \dots K-1$ }{
        $\gamma^{k+1} = \gamma^k - \alpha\, \text{sign}(\nabla_{\gamma^k} \mathcal{L}(f_\theta(x+\gamma^k), \hat{y}))$
     }
        \KwRet $f_\theta(x+\gamma^K)$ \; 
  }
   \caption{Anti-adversary classifier $g$} \label{alg:anti_adv}
\end{algorithm}

In general, we hypothesize that the stronger the anti-adversary layer solver for Problem \eqref{eq:anti_adv_layer} is, the more robust $g$ is against all attacks (including white-box and particularly against black-box attacks)\footnote{We leave to the \textbf{Appendix} a version of Theorem \ref{theo:improved-robustness-ratio}, where we derive the improved robustness factor under the white-box setting with the anti-adversary layer solving Eq. \eqref{eq:anti_adv_layer} using gradient descent.}. To that end, and throughout the paper, the anti-adversary layer solves Problem \eqref{eq:anti_adv_layer} with $K$ signed gradient descent iterations, zero initialization, and  $\mathcal{L}$ being the cross-entropy loss. Algorithm \ref{alg:anti_adv} summarizes the forward pass of $g$. Next, we empirically validate improvements in robustness over the full spectrum of adversaries.

\section{Experiments}

Evaluating robustness is an elusive problem, as it is ill-defined without establishing the information available to the attacker \eqref{eq:adversary} for constructing the adversary $\delta$. Prior works usually evaluate robustness under the adaptive, black-box or white-box settings. Here, we argue that robustness should be evaluated over the \textit{complete} spectrum of adversaries. In particular, we underscore that, while adaptive attacks can provide a worst-case robustness assessment, such assessment may be uninteresting for real deployments. For example, when the worst-case robustness of classifiers results in a draw, this tie can be broken by considering their robustness in the black-box setting, as this property increases its desirability for real-world deployment. 

Thus, we validate the effectiveness of our proposed anti-adversary classifier $g$ by evaluating robustness under adversaries from the full spectrum. (\textbf{i}) We first compare the robustness of $f_\theta$ against our proposed anti-adversary classifier $g$ with popular black-box attacks (Bandits \cite{ilyas2018prior}, NES \cite{ilyas2018black} and Square \cite{andriushchenko2020square}). We consider both cases when $f_\theta$ is nominally and robustly trained. Not only do we observe significant robustness improvements over $f_\theta$ with virtually no drop in clean accuracy, but we also outperform recently-proposed defenses, such as SND \cite{snd}. (\textbf{ii}) We further conduct experiments in the more challenging white-box setting with AutoAttack~\cite{autoattack} (in particular against the strong attacks APGD, ADLR \cite{autoattack}, and FAB \cite{croce2020minimally}), when $f_\theta$ is trained robustly with TRADES \cite{trades}, ImageNet-Pre \cite{pretraining}, MART \cite{mart}, HYDRA \cite{hydra}, and AWP \cite{awp}. (\textbf{iii})~We analyze robustness performance under tailored adaptive attacks, demonstrating that the worst-case performance is lower bounded by the robustness of $f_\theta$. In all experiments, we do \textit{not} retrain $f_\theta$ after prepending our anti-adversary layer. We set $K=2$ and $\alpha=0.15$ whenever Algorithm \ref{alg:anti_adv} is used, unless stated otherwise. (\textbf{iv}) Finally, we ablate the effect of the learning rate $\alpha$ and the number of iterations $K$ on the robustness gains.

\begin{table*}[t]
\centering
\caption{\textbf{Robustness of nominally trained models against black-box attacks:} We present the robustness of a nominally trained model against Bandits and NES, and how robustness is enhanced when equipping the model with SND~\cite{snd} and our anti-adversary layer (Anti-Adv). We perform all attacks with both 5$k$ and 10$k$ queries. Results shown are accuracy measured in $\%$ where bold numbers correspond to best performance. Our approach outperforms SND by a significant margin across datasets, attacks, and number of queries.}
\centering
\small
\begin{tabular}{c||ccccc||ccccc}
\toprule 
\midrule
    & \multicolumn{5}{c||}{ {CIFAR10}} & \multicolumn{5}{c}{ {ImageNet}} \\
    & Clean & \multicolumn{2}{c}{\text{Bandits}} & \multicolumn{2}{c||}{\text{NES}} & Clean & \multicolumn{2}{c}{\text{Bandits}} & \multicolumn{2}{c}{\text{NES}} \\
    & & \text{5K}& \text{10K} & \text{5K}& \text{10K} && \text{5K}& \text{10K}& \text{5K}& \text{10K}\\
\midrule
\text{Nominal Training}      &93.7	&24.0	 & 17.2  &5.8& 4.8	&79.2	&65.2&58.2	&22.4&21.0 \\
\text{    + SND \cite{snd}}  &92.9	&84.5& 84.3	&30.3&25.5	&79.2	&72.8&73.2	&65.4&60.2\\
\text{    + Anti-Adv}        &93.7	&\textbf{85.5}&	\textbf{86.4}&\textbf{77.0}	&\textbf{72.7}	&79.2	&\textbf{73.6}&\textbf{74.4}	&\textbf{67.2}&\textbf{66.0} \\
\midrule
\bottomrule
\end{tabular}\label{tb:nominal}
\end{table*}

\begin{table}[t]
\centering
\caption{\textbf{Equipping robustly trained models with Anti-Adv on CIFAR10 and CIFAR100 against black-box attacks.} We report clean accuracy (\%) and robust accuracy against \textit{Bandits}, \textit{NES} and \textit{Square attack} where bold numbers correspond to largest accuracy in each experiment. Our layer provides across the board improvements on robustness against all attacks without affecting clean accuracy.}
\centering
\begin{tabular}{c||c|ccg}
\toprule 
\midrule
\text{ { CIFAR10}}& \text{Clean} & \text{Bandits} & \text{NES}& \text{Square}  \\
\midrule
\text{TRADES 
}  & 85.4 & 64.7 & 74.7 & 53.1\\
\text{ + Anti-Adv }  & 85.4 & \textbf{84.6} & \textbf{83.0} & \textbf{71.7} \\
\midrule
\text{ImageNet-Pre 
}  & 88.7 & 68.4 & 78.1 &62.4\\
\text{ + Anti-Adv }  & 88.7 & \textbf{88.1} & \textbf{86.4} & \textbf{78.5}  \\
\midrule
\text{MART 
}  & 87.6 & 72.0 & 79.5 &64.9\\
\text{ + Anti-Adv }  & 87.6 & \textbf{86.5} & \textbf{85.3 }&\textbf{78.0} \\
\midrule
\text{HYDRA 
}  & 90.1 & 69.8 & 79.2 &65.0\\
\text{ + Anti-Adv }  & 90.1 & \textbf{89.4} & \textbf{87.7} &\textbf{78.8}  \\
\midrule
\text{AWP
}  & 88.5 & 71.5 & 80.1 &66.2\\
\text{ + Anti-Adv }  & 88.5 & \textbf{87.4} & \textbf{86.9} &\textbf{80.7}  \\
\midrule
\bottomrule
\midrule
 \text{ {CIFAR100}}   & \text{Clean} & \text{Bandits} & \text{NES}& \text{Square}  \\
\midrule
\text{ImageNet-Pre 
}  & 59.0 & 40.6 & 47.7 &34.6\\
\text{ + Anti-Adv }  & 58.9 & \textbf{58.2} & \textbf{55.3} &\textbf{42.4}  \\
\midrule
\text{AWP 
}  & 59.4 & 39.8 & 47.3 &34.7\\
\text{ + Anti-Adv }  & 59.4 & \textbf{57.7} & \textbf{53.8} &\textbf{46.4}  \\
\midrule
\bottomrule
\end{tabular}
\label{tb:defended_bb_cifar10}
\end{table}

\subsection{Robustness under Black-Box Attacks}
We start by studying how prepending our proposed anti-adversary layer to a classifier $f_\theta$ can induce robustness gains against black-box attacks. This is a realistic setting as several commercially-available APIs, \eg  BigML, only allow access to model predictions, and thus, they can only be targeted with black-box adversaries.

\paragraph{\textbf{Robustness when $f_\theta$ is Nominally Trained}.} We conduct experiments with ResNet18 \cite{resnets} on CIFAR10 \cite{cifars} and ResNet50 on ImageNet \cite{deng2009imagenet}. We compare our anti-adversary classifier $g$ against $f_\theta$ in terms of clean and robust test accuracy when subjected to two black-box attacks. In particular, we use the Bandits and NES attacks with query budgets of $5k$ and $10k$, and report results in Table \ref{tb:nominal}. In addition, we compare against a recently proposed approach for robustness through input randomization (SND \cite{snd}). We set $\sigma=0.01$ for SND, as it achieves the best performance. Following common practice \cite{snd}, and due to the expensive nature of evaluating Bandits and NES, all test accuracy results in Table \ref{tb:nominal} are reported on 1000 and 500 instances of CIFAR10 and ImageNet, respectively. For this experiment, we set $\alpha = 0.01$ in Algorithm \ref{alg:anti_adv}. Note that SND, the closest work to ours, outperforms the best performing defense in the black-box settings benchmarked in~\cite{dong2020benchmarking}.

\begin{table*}[t]
\centering
\caption{\textbf{Equipping robustly trained models with Anti-Adv on CIFAR10 and CIFAR100 against white-box attacks.} We report clean accuracy (\%) and robust accuracy against \textit{APGD}, \textit{ADLR}, \textit{FAB} and \textit{AutoAttack} where bold numbers correspond to largest accuracy in each experiment. The last column summarizes the improvement on the AutoAttack benchmark. We observe strong results on all models and attacks when adding our anti-adversary layer, with improvements close to $19\%$ all around.}
\centering
\begin{tabular}{c||c|ccc|g|c}
\toprule 
\midrule
  \text{ { CIFAR10}}  & \text{Clean} & \text{APGD} & \text{ADLR}& \text{FAB}&  \text{AutoAttack} & \text{Improvement} \\
\midrule
\text{TRADES
}  &84.92&	55.31&	53.12&	53.55&		53.11& \multirow{2}{*}{18.60}\\
\text{ + Anti-Adv}  &84.88&	\textbf{77.20}&	\textbf{77.05}&	\textbf{83.38}&		\textbf{71.71}  &  \\
\midrule
\text{ImageNet-Pre
}  &87.11&	57.65&	55.32&	55.69&		55.31& \multirow{2}{*}{20.70}\\
\text{ + Anti-Adv}  &87.11&	\textbf{78.76}&	\textbf{79.02}&	\textbf{85.07}&		\textbf{76.01}  &  \\
\midrule
\text{MART
}  &87.50&	62.18&	56.80&	57.34&		56.75 & \multirow{2}{*}{20.01} \\
\text{ + Anti-Adv}  &87.50&	\textbf{81.07}&	\textbf{80.54}&	\textbf{86.52}&		\textbf{76.76} & \\
\midrule
\text{HYDRA
}  &88.98&	60.13&	57.66&	58.42&		57.64 & \multirow{2}{*}{18.75}\\
\text{ + Anti-Adv}  &88.95&	\textbf{80.37}&	\textbf{81.42}& \textbf{87.92}&		\textbf{76.39}  & \\
\midrule
\text{AWP 
}  & 88.25& 63.81&	60.53&	60.98&		60.53& \multirow{2}{*}{18.68} \\
\text{ + Anti-Adv}  & 88.25&	\textbf{80.65}&	\textbf{81.47}&	\textbf{87.06}&		\textbf{79.21} & \\
\midrule
\bottomrule
\midrule
  \text{ { CIFAR100}}  & \text{Clean} & \text{APGD} & \text{ADLR}& \text{FAB}&  \text{AutoAttack} & \text{Improvement} \\
\midrule
\text{ImageNet-Pre
}  &59.37&	33.45&	29.03&	29.34&		28.96 &\multirow{2}{*}{11.72}\\
\text{ + Anti-Adv}  &58.42&	\textbf{47.63}&	\textbf{45.29}&	\textbf{53.57}&		\textbf{40.68} &   \\
\midrule
\text{AWP 
}  &60.38&	33.56&	29.16&	29.48&		29.15& \multirow{2}{*}{10.42}\\
\text{ + Anti-Adv}  &60.38&	\textbf{44.21}&	\textbf{40.32}&	\textbf{50.76}&		\textbf{39.57}& \\
\midrule
\bottomrule
\end{tabular}\label{tb:AutoAttack_cif10}
\end{table*}

As shown in Table \ref{tb:nominal}, nominally trained models $f_\theta$ are not robust: their clean accuracies on CIFAR10 and ImageNet drop from $93.7\%$ and $79.2\%$, respectively, to $4.8\%$ and $21\%$ when under black-box attacks. Moreover, while SND improves robustness significantly over $f_\theta$, \eg to $25.5\%$ on CIFAR10 and to $60.2\%$ on ImageNet, our proposed anti-adversary consistently outperforms SND across attacks, budget queries, and datasets. For instance, under the limited 5$k$ query budget, our anti-adversary classifier outperforms SND by $1\%$ and $46.7\%$ on CIFAR10 against Bandits and NES. The robustness improvements over SND increase even when attacks have a larger budget of 10$k$: on ImageNet our anti-adversary outperforms SND by $1.2\%$ against Bandits and by $5.8\%$ against NES. Further, we note that this improvement comes at \textit{no cost} on clean accuracy. In summary, Table \ref{tb:nominal} provides strong evidence suggesting that our proposed anti-adversary classifier improves the black-box robustness of a nominally trained $f_\theta$, outperforming the recent SND. In addition, this performance improvement does not hurt clean accuracy nor requires retraining $f_\theta$.

\paragraph{\textbf{Robustness when $f_\theta$ is Robustly Trained}.} 
We have provided evidence that our anti-adversary layer can improve black-box robustness of nominally trained $f_\theta$. 
Here, we investigate whether our anti-adversary layer can also improve robustness in the more challenging setting when $f_\theta$ is already robustly trained. This is an interesting setup as $f_\theta$ could have been trained robustly against white-box attacks and then deployed in practice where only function evaluations are available to the attacker \eqref{eq:adversary}, and hence only black-box robustness is of importance. Here we show we can improve black-box robustness with our proposed anti-adversary layer over five state-of-the-art robustly trained $f_\theta$: TRADES, IN-Pret, MART, HYDRA, and AWP on the CIFAR10 and CIFAR100 datasets. Similar to the previous experimental setup, and due to computational cost, we report robust accuracy on 1000 test set instances against Bandits and NES. However, for the more computationally-efficient Square attack, we report robust accuracy on the full test set.

Table \ref{tb:defended_bb_cifar10} reports the black-box robust accuracies of robustly-trained $f_\theta$ on CIFAR10 and CIFAR100, respectively. We highlight the highest scores in bold. In line with our previous observations, prepending our anti-adversary layer to $f_\theta$ has no impact on clean accuracy. More importantly, although $f_\theta$ is robustly trained and thus already enjoys large black-box robust accuracy, our proposed anti-adversary layer can boost its robustness further by an impressive $\sim15\%$. For instance, even for the top-performing $f_\theta$ (trained with AWP with a robust accuracy of $66.2\%$ on CIFAR10), our anti-adversary layer improves robustness by $14.5\%$, reaching $80.7\%$. Similarly, for CIFAR100, the anti-adversary layer improves the worst-case black-box robustness of AWP by $11.7\%$. Overall, our anti-adversary layer consistently improves black-box robust accuracy against all attacks for all robust training methods on both CIFAR10 and~CIFAR100.

\textbf{SND + Robustly Trained $f_\theta$.}
Although SND \cite{snd} does not report performance on robustly-trained models, we experiment with equipping AWP-trained models with SND. We observe that SND significantly degrades both clean and robust accuracies of AWP: employing SND on top of AWP drops clean accuracy from $88.5\%$ to $70.0\%$, while its robust accuracy (under Square attack) drops from $66.2\%$ to $59.1\%$. These results suggest that our proposed anti-adversary layer is superior to SND.

\textbf{Other Black-Box Defenses}
We compare against Random Self-Ensemble (RSE)~\cite{rse} on CIFAR10 and find that it underperforms in comparison to our approach, both in clean accuracy, with $86.7\%$, and in robust accuracy, with $78.8\%$ and $85.5\%$ under NES and Bandits, respectively. While RSE is more robust than robustly trained models against black-box attacks, equipping such models with our anti-adversary layer outperforms RSE, as illustrated, for instance, by the HYDRA+Anti-Adv row in Table~\ref{tb:defended_bb_cifar10}.

\textbf{Section Summary.} Our proposed anti-adversary layer can improve state-of-the-art robust accuracy in the realistic black-box  setting when combined with robustly trained $f_\theta$, while coming at no cost to clean accuracy. The robust black-box accuracy improvements are consistent across classifiers $f_\theta$, both with regular or robust training.

\subsection{Robustness under White-Box Attacks}

In this setting, the attacker \eqref{eq:adversary} has complete knowledge about the classifier. This challenging setup is less realistic compared to the black-box setting. Nonetheless, it is still an interesting measure of overall robustness when more information is accessible to the attacker \eqref{eq:adversary}. Various prior works~\cite{xie2019feature, feature_scattering} report robustness performance only in this setting by reporting accuracy under PGD \cite{madry2017towards} or AutoAttack.

Similar to the previous section, we experiment on CIFAR10 and CIFAR100 and assess how prepending our anti-adversary layer to robustly trained classifiers $f_\theta$ can enhance the classifiers' robustness. We report the full test robust white-box accuracy against the gradient-based attacks from AutoAttack, \ie APGD, ADLR and FAB, and also measure the accuracy under AutoAttack, defined as the worst-case accuracy across these four attacks (three white-box attacks in addition to Square attacks) under $\epsilon=\nicefrac{8}{255}$ in \eqref{eq:adversary}. We underscore that the AutoAttack ensemble is currently the standard for benchmarking defenses, \ie it is the \textit{de facto} strongest attack in this setting.

\begin{figure}
    \centering
    \includegraphics[width=0.80\linewidth]{ 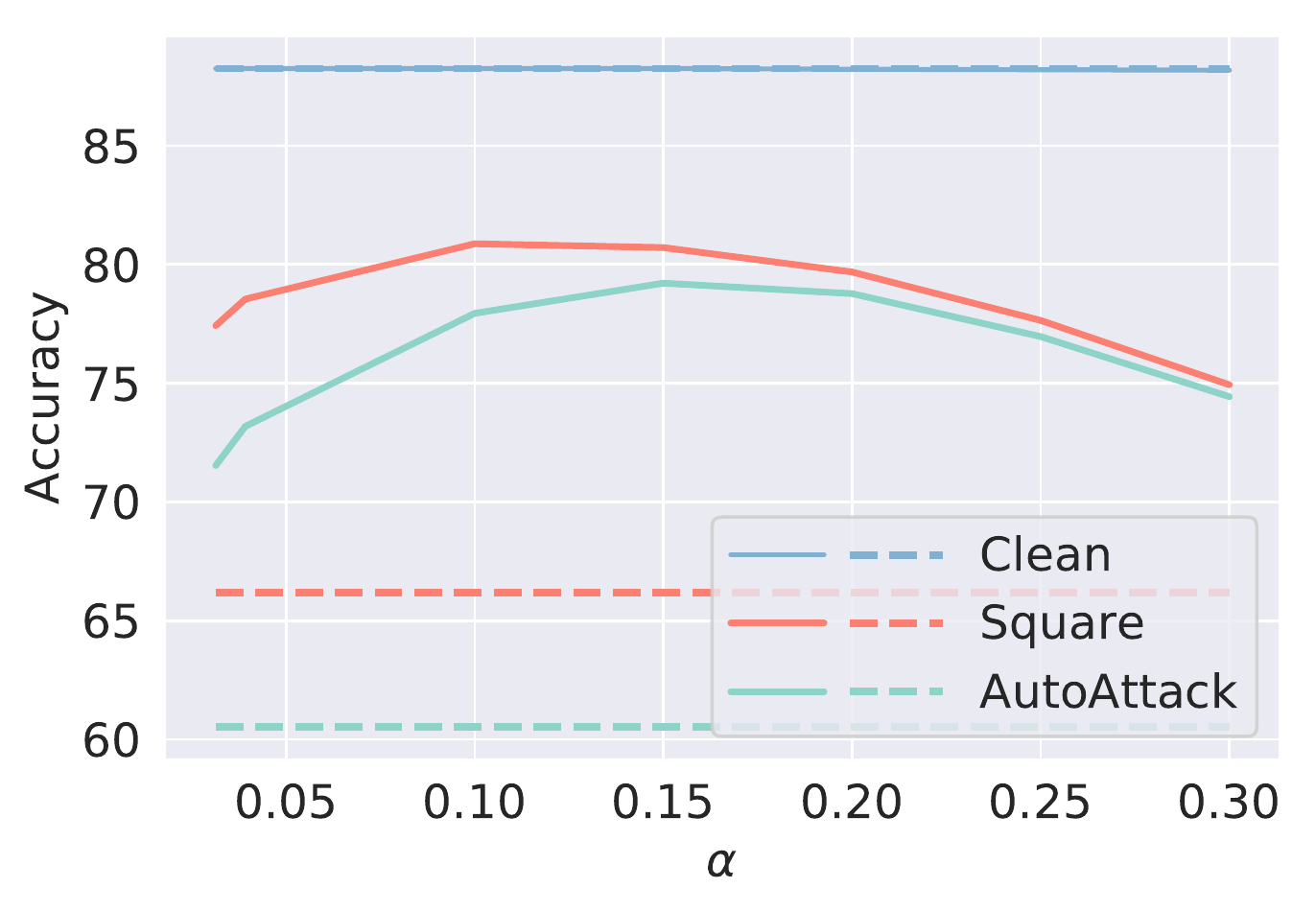}
    \caption{\textbf{Effect of varying $\boldsymbol \alpha$ on clean and robust accuracy for AWP+Anti-Adv on CIFAR10.} Dashed lines depict AWP's performance. Our layer provides substantial improvements on robust accuracy with different choices of $\alpha$ and with no effect on clean accuracy.}
    \label{fig:Ablating_learning_rate}
\end{figure}

In Table \ref{tb:AutoAttack_cif10}, we report robust accuracies on CIFAR10 and CIFAR100, respectively, and highlight the strongest performance in bold. We first observe that our anti-adversary layer improves robust accuracy by an impressive $\sim19\%$ on average against AutoAttack. In particular, for AWP, the strongest defense we consider, adversarial robustness increases from $60.53\%$ to an astounding $79.21\%$. We further observe similar results for CIFAR100: Table \ref{tb:AutoAttack_cif10} shows that the anti-adversary layer adds an average improvement of $\sim11\%$. For instance, the adversarial robustness of ImageNet-Pre increases from $28.96\%$ to over $40\%$. The improvement is consistent across all defenses on CIFAR100, with a worst-case drop in clean accuracy of $~1\%$. We also compare our approach against SND under this setup (as the experiments in SND~\cite{snd} do not study its interaction with robust training). Notably, equipping AWP with SND comes at a notable drop in clean accuracy (from $88.25\%$ to $70.03\%$) along with a drastic drop in robust accuracy (from $60.53\%$ to $27.04\%$) under AutoAttack on CIFAR10.

\noindent \textbf{Section Summary.} Our experiments suggest that, even in the challenging setting where the attacker \eqref{eq:adversary} is granted access to the gradients, our anti-adversary layer still proves to provide benefits to all defenses. For both CIFAR10 and CIFAR100, the anti-adversary layer seamlessly provides vast improvements in adversarial robustness.

\subsection{Adaptive Attacks: Worst-Case Performance}

Here, we analyze the worst-case robustness of our proposed anti-adversary classifier $g$. In particular, and under the \textit{least} realistic setting, we assume that our anti-adversary classifier $g$ is fully transparent to the attacker \eqref{eq:adversary} when tailoring an adversary. Following the recommendations in \cite{adaptive}, we explore various directions to construct an attack, such as Expectation Over Transformation (EOT) \cite{EOT,adaptive}. However, since our anti-adversary layer is deterministic, as illustrated in Algorithm \ref{alg:anti_adv}, EOT is ineffective for improving the gradient estimate. Nevertheless, we note that the anti-adversary layer depends on the pseudo-label assigned by $f_\theta$ to the original instance $x$, \ie $\hat{y}(x) = \argmax_i f_\theta^i(x)$. Therefore, an attacker with access to $g$'s internal structure can first design an adversary $\delta$ such that $\hat{y}(x+\delta) \neq y$ with $\|\delta\|_p \leq \epsilon$ following \eqref{eq:adversary}, where $y$ is $x$'s label. If $\delta$ is successfully constructed in this way, it will cause both $f_\theta$ and $g$ to produce different predictions for $x$ and $(x+\delta)$. Thus, in the least realistic adversary setting, the set of adversaries that fools $f_\theta$ fools $g$ as well. Accordingly, we argue that the worst-case robust accuracy for $g$ under adaptive attacks is lower bounded by the robust accuracy of the base classifier $f_\theta$. While, as noted in previous sections, our anti-adversary layer boosts robust accuracy over all tested datasets and classifiers $f_\theta$ (nominally or robustly trained), the worst-case robustness under the least realistic setting (adaptive attacks) is lower bounded by the robustness of $f_\theta$. This highlights our motivation that prepending our layer is of a great value to existing robust models due to its simplicity and having no cost on clean accuracy.
\begin{figure}
    \centering
    \includegraphics[width=0.80\linewidth]{ 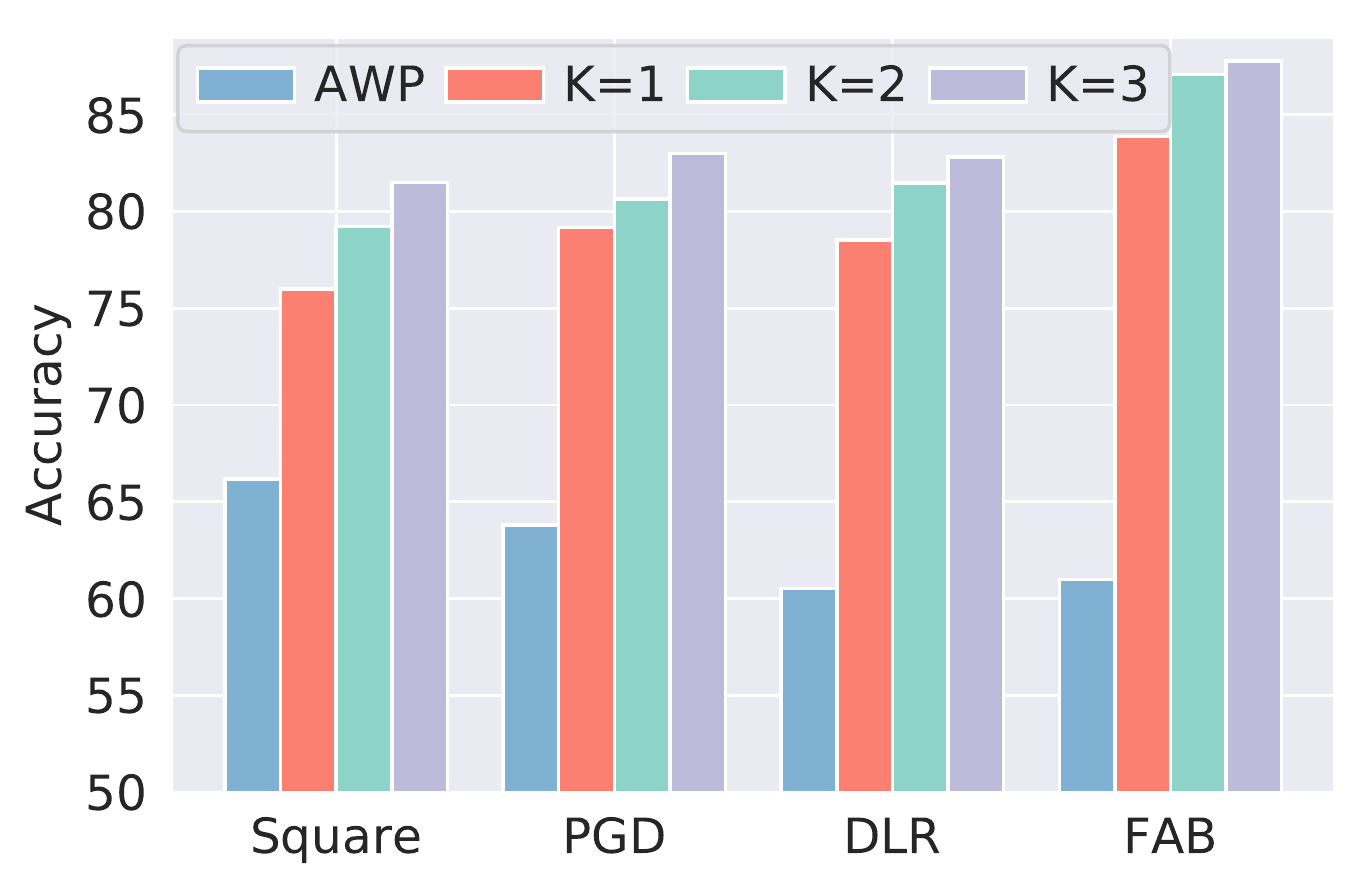}
    \caption{\textbf{Effect of varying $\boldsymbol K$ on robust accuracy for AWP+Anti-Adv on CIFAR10.} The better the solver for \eqref{eq:anti_adv_layer} is, the larger the robustness gains that our layer provides.}
    \label{fig:Ablating_K}
\end{figure}
\subsection{Ablations}
Our proposed Algorithm \ref{alg:anti_adv} has two main parameters: the learning rate, $\alpha$, and the number of iterations, $K$. We ablate both to assess their effect on robustness. All experiments are conducted on a robustly-trained $f_\theta$ (with AWP). First, we fix $K=2$ and vary $\alpha$ in the set $\{\nicefrac{8}{255}, \nicefrac{10}{255}, 0.1, 0.15, 0.2, 0.25, 0.3\}$. In Figure \ref{fig:Ablating_learning_rate}, we compare $f_\theta$ to our anti-adversary classifier $g$ in terms of clean and robust accuracies under a black-box (Square) and a white-box (AutoAttack) attacks. As shown in blue, the effect of $\alpha$ on clean accuracy is almost non-existent. On the other hand, while the robust accuracy varies with $\alpha$, the robustness gain of $g$ over $f_\theta$ is always $\ge 10\%$ for all $\alpha$ values. Next, we study the effect of varying $K \in \{1, 2, 3\}$ while fixing $\alpha=0.15$. Results in Figure \ref{fig:Ablating_K} show that all choices of $K$ lead to significant improvement in robustness against all attacks, with $K=3$ performing best. This confirms our claim that the better the solver for \eqref{eq:anti_adv_layer}, the better the robustness performance of our anti-adversary classifier. Note that while one could further improve the robustness gains by increasing $K$, this improvement comes at the expense of more computations. It is worthwhile to mention that the cost of computing the anti-adversary is $(K+1)$ forward and $K$ backward passes, which is marginal for small values of $K$. Finally, we leave more ablations, the implementation details, and the rest of our experimental results to the \textbf{Appendix}.

\section{Conclusion}
We present the anti-adversary layer, a novel training-free and theoretically supported defense against adversarial attacks. Our layer provides significant improvements in network robustness against black- and white-box attacks.

\textbf{Acknowledgement.} This publication is based upon work supported by the King Abdullah University of Science and Technology (KAUST) Office of Sponsored Research (OSR) under Award No. OSR-CRG2019-4033. We would also like to thank Humam Alwassel for the help and discussion.
\bibliography{references}

\newpage
\onecolumn
\appendix

\section{Proofs}
\begin{prop}\label{supp:prop_1}
Let SimBA \cite{simba} with a budget of $2B$ queries select a random direction $q \in Q$, with replacement, thus updating the iterates $x^{k+1} \leftarrow x^k$ by selecting the direction among $\{\epsilon q, -\epsilon q\}$ with the maximum $\mathcal{L}$. Then, SimBA is equivalent to STP with $B$ iterations.
\end{prop}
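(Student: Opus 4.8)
The plan is to place both procedures in a common notation and then exhibit a term-by-term correspondence between one double-query step of the modified SimBA and one iteration of STP. First I would write STP explicitly: it draws a direction $s^k$ from a fixed distribution $\mathcal{D}$ and sets $x^{k+1} = \argmin_{z} f(z)$ over the three candidates $\{x^k,\, x^k + \alpha_k s^k,\, x^k - \alpha_k s^k\}$, reusing the already-computed value $f(x^k)$ so that each iteration costs exactly two fresh function evaluations. In parallel I would write out the modified SimBA as described in the statement: sample $q$ uniformly from $Q$ \emph{with replacement}, evaluate $\mathcal{L}$ at both $x^k + \epsilon q$ and $x^k - \epsilon q$, and move to the candidate with the largest $\mathcal{L}$ (keeping $x^k$ when neither perturbation increases the loss, which is precisely the behaviour that remains after removing the computational short-circuit break).

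The core of the argument is the dictionary $f(\cdot) := -\mathcal{L}(f_\theta(\cdot), y)$, $\alpha_k \equiv \epsilon$, $s^k := q$, and $\mathcal{D} := \mathrm{Unif}(Q)$. Under this identification, $\argmax_z \mathcal{L}(f_\theta(z),y) = \argmin_z f(z)$ over any common candidate set, so maximizing the confidence-reducing loss is literally minimizing $f$, and the two update rules select the same iterate. I would then verify that the candidate sets coincide: both methods compare $x^k$, $x^k + \epsilon q$, and $x^k - \epsilon q$, so once the directions are coupled the two per-step maps are identical as random maps.

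Next I would handle the sampling and the query budget. Sampling $q$ from $Q$ with replacement makes the directions across steps i.i.d., matching STP's assumption that each $s^k$ is drawn independently from $\mathcal{D}$; this is exactly why the with-replacement modification is needed, since the original without-replacement SimBA would correlate directions across steps and violate the i.i.d. structure STP requires. For the budget, STP consumes two new evaluations of $f$ per iteration, so a SimBA budget of $2B$ queries realizes exactly $B$ STP iterations, producing the iterates $x^1,\dots,x^B$, which gives the claimed equivalence.

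I expect the main obstacle to be bookkeeping around the three-point comparison rather than any deep step: one must argue carefully that after dropping the conditional break and evaluating both $\pm\epsilon q$, the current iterate $x^k$ remains a valid candidate, so that the method stays monotone in $\mathcal{L}$ and matches STP's inclusion of $x^k$ in its $\argmin$. The only other point requiring care is the sign convention, namely checking that selecting the direction of maximum $\mathcal{L}$ is the same selection as STP's minimization once $f = -\mathcal{L}$; after these two checks the equivalence is immediate.
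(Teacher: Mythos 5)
Your proposal is correct and follows essentially the same route as the paper's proof: the paper likewise writes the modified SimBA step as a three-point comparison over $\{x^k,\, x^k+\epsilon q,\, x^k-\epsilon q\}$, identifies it with one STP iteration, and notes that reusing $\mathcal{L}(f_\theta(x^k),y)$ means each step costs two fresh queries, so $2B$ queries yield $B$ iterations. Your additional care about the sign convention ($f=-\mathcal{L}$) and the i.i.d. sampling is just making explicit what the paper leaves implicit.
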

\begin{proof} 
As per Proposition \ref{supp:prop_1} and Algorithm 1 in \cite{simba}, SimBA updates can be written as follows:
\begin{align*}
    &\text{Sample a direction $q$ from $Q$ with replacement}\\
    & \text{Generate $x_+ = x^k + \epsilon q$ and $x_- = x^k - \epsilon q$}\\
    & \text{Update: } x^{k+1} = \argmax\{\mathcal L (f_\theta(x_+), y), \mathcal L (f_\theta(x_-), y), \mathcal L (f_\theta(x^k), y)\}
\end{align*}
Thus, each iteration is updating the adversarial example with one STP \cite{stp} step. Note that each iteration requires two queries ($f_\theta(x_+)$ and $f_\theta(x_-)$) since $f_\theta(x^k)$ is pre-computed in the previous iteration concluding the proof. 
\end{proof}

\begin{cor} Let $\mathcal{L}$ be $L$-smooth, bounded above by $\mathcal L(f_\theta(x^*),y)$, and the steps of SimBA satisfy $ 0 < \epsilon < \nicefrac{\rho}{nL}$ while sampling directions from the Cartesian canonical basis ($Q$ is an identity matrix here). Then, so long as:
\begin{align*}
    B > \frac{\mathcal L(f_\theta(x^*),y)-\mathcal{L}(f_\theta(x^0),y)}{(\frac{\rho}{n} - \frac{L}{2}\epsilon)\epsilon} = K_{f_\theta},
\end{align*}
we have that ~$\min_{k=1,2,\dots,B} ~\mathbb E\left[\|\nabla \mathcal L (f_\theta(x^{k}),y)\|_1\right] < \rho$.
\end{cor}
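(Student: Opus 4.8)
The plan is to lean on Proposition~\ref{supp:prop_1}, which identifies SimBA (with a $2B$ query budget) with $B$ iterations of STP maximizing the objective, and then to run the textbook descent-lemma analysis for STP. Writing $g(x):=\mathcal{L}(f_\theta(x),y)$, $L$-smoothness supplies the quadratic sandwich $|g(x+v)-g(x)-\langle\nabla g(x),v\rangle|\le\tfrac{L}{2}\|v\|^2$. First I would apply this at the current iterate $x^k$ with $v=\pm\epsilon s_k$, where $s_k$ is the direction sampled at step $k$, obtaining the two matched lower bounds $g(x^k\pm\epsilon s_k)\ge g(x^k)\pm\epsilon\langle\nabla g(x^k),s_k\rangle-\tfrac{L}{2}\epsilon^2\|s_k\|^2$.

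Next I would use that an STP step returns the \emph{best} of $\{x^k,\,x^k+\epsilon s_k,\,x^k-\epsilon s_k\}$ (the maximizer, since SimBA maximizes $\mathcal{L}$). Selecting the favorable sign of $\langle\nabla g(x^k),s_k\rangle$, the maximum absorbs the sign of the linear term and yields the per-step progress guarantee $g(x^{k+1})\ge g(x^k)+\epsilon\,|\langle\nabla g(x^k),s_k\rangle|-\tfrac{L}{2}\epsilon^2\|s_k\|^2$. Because $Q$ is the canonical basis, $s_k=e_{i_k}$ gives $\|s_k\|^2=1$ and $|\langle\nabla g(x^k),s_k\rangle|=|\partial_{i_k}g(x^k)|$. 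Taking the conditional expectation over the uniform draw $i_k\in\{1,\dots,n\}$ converts the average of coordinate magnitudes into $\tfrac1n\|\nabla g(x^k)\|_1$, so that $\mathbb{E}[g(x^{k+1})\mid x^k]\ge g(x^k)+\tfrac{\epsilon}{n}\|\nabla g(x^k)\|_1-\tfrac{L}{2}\epsilon^2$.

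The final step is a telescoping/averaging argument. Taking total expectations and rearranging isolates $\tfrac{\epsilon}{n}\mathbb{E}[\|\nabla g(x^k)\|_1]\le\mathbb{E}[g(x^{k+1})]-\mathbb{E}[g(x^k)]+\tfrac{L}{2}\epsilon^2$; summing over the $B$ iterations telescopes the right-hand side to $\mathbb{E}[g(x^B)]-g(x^0)+B\tfrac{L}{2}\epsilon^2$, which boundedness above caps by $g(x^*)-g(x^0)+B\tfrac{L}{2}\epsilon^2$ since $\mathbb{E}[g(x^B)]\le\mathcal{L}(f_\theta(x^*),y)$. Bounding the sum below by $B\cdot\min_k\mathbb{E}[\|\nabla g(x^k)\|_1]$ and dividing gives $\min_k\mathbb{E}[\|\nabla g(x^k)\|_1]\le\tfrac{n(g(x^*)-g(x^0))}{\epsilon B}+\tfrac{nL\epsilon}{2}$. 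Forcing the right-hand side below $\rho$ and solving for $B$ produces exactly $K_{f_\theta}$ after factoring $n$ out of $\rho-\tfrac{nL\epsilon}{2}=n\big(\tfrac{\rho}{n}-\tfrac{L\epsilon}{2}\big)$.

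The main obstacle is really a sign/positivity check rather than a hard inequality: dividing by $\tfrac{\rho}{n}-\tfrac{L\epsilon}{2}$ only preserves the inequality and makes $K_{f_\theta}$ a finite, meaningful budget if that quantity is strictly positive. This is precisely where the hypothesis $0<\epsilon<\rho/(nL)$ is needed, as it forces $\tfrac{\rho}{n}-\tfrac{L\epsilon}{2}>\tfrac{\rho}{2n}>0$. The only remaining bookkeeping, the factor of two between the $2B$ queries and the $B$ STP updates, is already settled by Proposition~\ref{supp:prop_1}.
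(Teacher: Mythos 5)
Your proof is correct and takes essentially the same route as the paper: reduce SimBA with a $2B$-query budget to $B$ iterations of STP (Proposition 1) and then apply the STP convergence guarantee for canonical-basis sampling. The only difference is that the paper simply cites Theorem 4.2 of the STP paper for this second step, whereas you derive it explicitly --- your per-step progress bound from $L$-smoothness, the conditional expectation giving $\frac{\epsilon}{n}\|\nabla \mathcal{L}\|_1$, the telescoping, and the positivity check on $\frac{\rho}{n}-\frac{L\epsilon}{2}$ are all sound, with only a harmless index shift between $\min_{k=0,\dots,B-1}$ and $\min_{k=1,\dots,B}$ left implicit (as it also is in the paper).
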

\begin{proof}
This follows directly by combining proposition \ref{prop:simba-stp} and Theorem 4.2 in~\cite{stp}, and observing that $x > \ceil{x} - 1$.
\end{proof}

\begin{definition}(Local Monotonicity) A function $f :\mathbb{R}^n \rightarrow \mathbb{R}$ is locally monotone over a set $\mathcal R$ and a scalar $\beta$ when for any:
\begin{equation}
\begin{aligned}
     f(x) \leq f(x+\alpha_i r) \iff 
    f(x) \geq f(x-\alpha_j r) \quad \forall r \in \mathcal R, \,\, \alpha_i < \beta,\,\, \alpha_j < \beta 
\end{aligned}
\end{equation}

\end{definition}

\begin{theo}\label{supp:theo1}
Let the assumptions in Proposition \ref{prop:simba-stp} and Corollary \ref{cor:stp-convergence} hold and let $\mathcal L$ be locally monotone over $Q$ and $\nicefrac{\rho}{nL}$. Then, the anti-adversary classifier $g$ described in Eq. \eqref{eq:anti_adv_layer}, where $\gamma$ is computed with a single STP update in the same direction $q$ as SimBA but with a learning rate $\epsilon_g = (1-c) \epsilon$ with $c<1$, is more robust against SimBA attacks than $f_\theta$. In particular, $\forall ~ c
\leq 0$, SimBA fails to construct adversaries for $g$ (\ie $K_g = \infty$). Moreover, for $c \in (0,1)$, the improved robustness factor for $g$ is

\begin{equation}
    G(c) := \frac{K_g}{K_{f_\theta}} =  \frac{\frac{\rho}{n}-\frac{L\epsilon}{2}}{(\frac{\rho}{n} - \frac{L\epsilon}{2}c)c} > 1.
\end{equation}
\end{theo}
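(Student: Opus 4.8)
The plan is to reduce the analysis of SimBA attacking $g$ to the query bound for $f_\theta$ already established in Corollary~\ref{cor:stp-convergence}, by showing that the anti-adversary layer merely rescales SimBA's effective step size. First I would unpack what happens when SimBA probes $g$ along the ascent direction of $\mathcal{L}$, i.e.\ at $x^k + \epsilon q$ (the probe SimBA accepts). By definition $g(z) = f_\theta(z + \gamma(z))$, and by hypothesis $\gamma$ is a single STP step that minimizes $\mathcal{L}$ along the sampled direction $q$ with rate $\epsilon_g = (1-c)\epsilon$. The local-monotonicity assumption over $Q$ and $\nicefrac{\rho}{nL}$ is exactly what guarantees that this confidence-maximizing (loss-minimizing) step points opposite to SimBA's ascent step: if $\mathcal{L}$ increases along $+q$, then it decreases along $-q$, so the anti-adversary selects $\gamma = -\epsilon_g q$. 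Hence the accepted probe evaluates $g(x^k + \epsilon q) = f_\theta\big(x^k + (\epsilon - \epsilon_g) q\big) = f_\theta(x^k + c\epsilon q)$, so each accepted SimBA update advances the effective iterate by $c\epsilon q$ rather than $\epsilon q$.

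The upshot is a clean equivalence: running SimBA against $g$ with step size $\epsilon$ behaves like running SimBA against $f_\theta$ with the reduced step size $c\epsilon$. I would then invoke Corollary~\ref{cor:stp-convergence} verbatim with $\epsilon$ replaced by $c\epsilon$, yielding
\begin{equation*}
K_g = \frac{\mathcal{L}(f_\theta(x^*),y) - \mathcal{L}(f_\theta(x^0),y)}{\left(\frac{\rho}{n} - \frac{L}{2}c\epsilon\right)c\epsilon}.
\end{equation*}
Dividing $K_g$ by $K_{f_\theta}$, the numerator $\Delta := \mathcal{L}(f_\theta(x^*),y) - \mathcal{L}(f_\theta(x^0),y)$ cancels and one factor of $\epsilon$ cancels, leaving exactly the stated $G(c) = \big(\tfrac{\rho}{n} - \tfrac{L\epsilon}{2}\big) / \big[(\tfrac{\rho}{n} - \tfrac{L\epsilon}{2}c)c\big]$.

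Next I would dispatch the degenerate regime $c \leq 0$. Here the effective step $c\epsilon$ is non-positive: when $c = 0$ the anti-adversary exactly annihilates SimBA's move so the iterate never changes, and when $c < 0$ it overshoots, so every accepted probe actually decreases $\mathcal{L}$. In either case SimBA can never drive $\mathcal{L}$ toward the optimum, so no finite budget suffices and $K_g = \infty$, as claimed. Finally, to prove $G(c) > 1$ on $(0,1)$ I would show the numerator exceeds the denominator, i.e.\ $\big(\tfrac{\rho}{n} - \tfrac{L\epsilon}{2}\big) - \big(\tfrac{\rho}{n} - \tfrac{L\epsilon}{2}c\big)c > 0$; factoring gives $(1-c)\big[\tfrac{\rho}{n} - \tfrac{L\epsilon}{2}(1+c)\big]$, and since $1-c>0$ it remains to check the bracket. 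The standing assumption $\epsilon < \nicefrac{\rho}{nL}$ gives $\tfrac{L\epsilon}{2} < \tfrac{\rho}{2n}$, so $\tfrac{L\epsilon}{2}(1+c) < \tfrac{\rho}{2n}\cdot 2 = \tfrac{\rho}{n}$ because $c<1$, making the bracket positive and closing the inequality.

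I expect the main obstacle to be the first step: rigorously justifying that the anti-adversary's STP step reduces to a clean $-\epsilon_g q$ contribution that subtracts from SimBA's ascent probe, producing effective step $c\epsilon$. This is precisely where local monotonicity is indispensable, since it rules out the pathological case where the confidence-maximizing anti-adversary and the loss-maximizing attacker happen to pick the same signed direction, which would break the reduction. Everything downstream is then a mechanical substitution into Corollary~\ref{cor:stp-convergence} followed by a one-line factorization.
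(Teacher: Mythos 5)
Your proposal is correct and follows essentially the same route as the paper: you reduce SimBA-on-$g$ to SimBA-on-$f_\theta$ with effective step $c\epsilon$ via local monotonicity (the anti-adversary step cancels against the attacker's along the shared direction $q$), re-invoke Corollary~\ref{cor:stp-convergence} with the rescaled step to get $K_g$, and handle $c\leq 0$ by noting the loss can no longer increase. The only (cosmetic) difference is the final inequality: you show $G(c)>1$ by factoring the numerator-minus-denominator as $(1-c)\bigl[\tfrac{\rho}{n}-\tfrac{L\epsilon}{2}(1+c)\bigr]$, whereas the paper shows $G$ is strictly decreasing with $\lim_{c\to 1}G(c)=1$; both are valid.
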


\begin{proof} 
Assume that the network correctly classifies the input $x$ $(\ie \hat y = y)$ for ease. The anti-adversary update is as per Theorem \ref{supp:theo1} follows a single STP update with $\epsilon_g$ as a step size with the following output:
\begin{equation}
\label{supp:anti_adv_eq}
\begin{aligned}
\gamma^k = \text{arg}\min\{\mathcal L(f_\theta(x_+), y), \mathcal L(f_\theta(x_-), y), \mathcal L(f_\theta(x), y) \} - x^k,
\end{aligned}
\end{equation}
where $x_+ = x^k + \epsilon_g q_k$ and $x_- = x^k - \epsilon_g q_k$. Similarly, the adversary single SimBA update takes the following form:
\begin{equation}
\label{supp:adv_eq}
\begin{aligned}
\delta^k = \text{arg}\max\{\mathcal L(f_\theta(x_+'), y), \mathcal L(f_\theta(x_-'), y)), \mathcal L(f_\theta(x), y) \} - x^k,
\end{aligned}
\end{equation}
where $x_+ = x^k + \epsilon q_k$ and $x_- = x^k - \epsilon q_k$. Thus, an iteration of SimBA adversary for our classifier $g$ that has the anti-adversary layer (substituting Equations \ref{supp:anti_adv_eq} and \ref{supp:adv_eq}) takes the following form:
\begin{align*}
    x^{k+1} &= x^k + \gamma^k + \delta^k \\
    &= x^k + \text{arg}\min\{\mathcal L(f_\theta(x_+), \hat y), \mathcal L(f_\theta(x_-), \hat y), \mathcal L(f_\theta(x),\hat y) \} - x^k \\
    & \quad \quad \, \,+ \text{arg}\max\{\mathcal L(f_\theta(x_+'), \hat y), \mathcal L(f_\theta(x_-'), \hat y), \mathcal L(f_\theta(x),\hat y) \} - x^k.
\end{align*}
Without loss of generality, and due to local monotonicity of $\mathcal{L}$, when $x_+ = \text{arg}\min\{.\}$ then $x_-' = \text{arg}\max\{.\}$. Thus, we have that the adversarial update is given as follows:

\begin{equation}\label{iterate}
    x^{k+1} = -x^k + x_+' + x_- = x^k + (\epsilon - \epsilon_g) q_k.
\end{equation}

\noindent (\textbf{i}) If $\epsilon_g = \epsilon$, then $x^{k+1} = x^k$, and thus the attacker does not succeed in constructing the adversary $(\eg K_g = \infty)$. (\textbf{ii}) If $\epsilon_g > \epsilon$, then $(\epsilon - \epsilon_g)q_k$ is in the direction of $x_-$, hence $\mathcal L(x^{k+1},\hat y) < \mathcal L (x^k, \hat y)$. In this setting, the attacker fails at constructing an adversary as the loss is being decreasing. (\textbf{iii}) Lastly, if $\epsilon > \epsilon_g$, then the iterate in \eqref{iterate} is maximizes $\mathcal{L}$ with an effective learning rate of $(\epsilon-\epsilon_g = c\epsilon)$. Therefore, based on Theorem 4.2 in \cite{stp}, to guarantee $\rho-$convergence, we need
\[
B > \frac{\mathcal L(f_\theta(x^*),y)-\mathcal{L}(f_\theta(x^0),y)}{(\frac{\rho}{n} - \frac{L\epsilon}{2} c)\epsilon c} = K_{g}.
\]
The improved robustness factor, excess query budget to maximize $\mathcal{L}$ for $g$ as compared to $f_\theta$, is $\nicefrac{K_g}{K_{f_\theta}}$. Note that $G(c) > 1$ since $G$ is monotonically decreasing,
\begin{align*}
    \frac{dG(c)}{dc} = \frac{-(\frac{\rho}{n} - \frac{L\epsilon}{2})(\frac{\rho}{n} - L\epsilon c)}{((\frac{\rho}{n} - \frac{L\epsilon}{2} c)\epsilon c)^2} < 0,
\end{align*}
for $\epsilon < \frac{\rho}{nL}$ and that $\underset{c\rightarrow 1}{\lim} G(c) = 1$, concluding the proof.
\end{proof}

\section{Extending Theorem \ref{theo:improved-robustness-ratio} for White-Box Adversary}
We analyze the robustness of our proposed anti-adversary classifier $g$ in a similar spirit to Theorem \ref{theo:improved-robustness-ratio} but in the white-box setting where the attacker is granted gradient access solving \eqref{eq:adversary}. For ease of exposition, in this scenario, the anti-adversary is assumed to be computed with a single gradient descent iteration $K=1$. Below, we restate the classical result on the convergence rate of gradient descent on smooth non-convex loss $\mathcal{L}$ with a classifier $f_\theta$ \cite{schmidt_slides}.

\begin{theo}\label{prop:gd_attacker}
Let $\mathcal L$ be L-smooth and bounded above by  $\mathcal{L}(f_\theta(x^*), y)$. Let the attacker solve \eqref{eq:adversary} with $B$ iterations of gradient ascent, \ie $x^{k+1} = x^k + \frac{\alpha}{L}\nabla_{x} \mathcal{L}(f_\theta(x^k), y)$ with $\alpha \in (0,1]$. If 
\[ B > \frac{2L(\mathcal L (f_\theta(x^*), y) - \mathcal{L}(f_\theta(x^0), y)}{(2-\alpha)\alpha \rho} = K_{f_\theta},
\]
then $\min_{k=1,2,\dots,B} ~\mathbb \|\nabla \mathcal L (f_\theta(x^{k}),y)\|_2^2 < \rho$.
\end{theo}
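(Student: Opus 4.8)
The plan is to recognize this as the textbook convergence analysis of gradient ascent on an $L$-smooth (possibly nonconvex) objective, and to carry it out with the shorthand $h(x) := \mathcal{L}(f_\theta(x),y)$. The only structural inputs needed are the $L$-smoothness of $h$ and the fact that it is bounded above by $h(x^*)$; the attacker's update $x^{k+1} = x^k + \frac{\alpha}{L}\nabla h(x^k)$ then forces a guaranteed per-step increase of the objective, and counting how many such increases fit under the ceiling $h(x^*)$ yields the query threshold.

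First I would invoke the quadratic consequence of $L$-smoothness in its lower-bound (ascent) orientation, namely $h(x^{k+1}) \geq h(x^k) + \langle \nabla h(x^k),\, x^{k+1}-x^k\rangle - \frac{L}{2}\|x^{k+1}-x^k\|_2^2$. Substituting the gradient-ascent step $x^{k+1}-x^k = \frac{\alpha}{L}\nabla h(x^k)$ collapses the inner-product and quadratic terms into a single gradient term, yielding the per-iteration progress bound
\[
h(x^{k+1}) - h(x^k) \;\geq\; \frac{\alpha(2-\alpha)}{2L}\,\|\nabla h(x^k)\|_2^2 .
\]
Because $\alpha \in (0,1]$, the prefactor $\frac{\alpha(2-\alpha)}{2L}$ is strictly positive, so each step raises $h$ by an amount proportional to the squared gradient norm at the current iterate.

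Next I would telescope this inequality over $k = 0,1,\dots,B-1$. The left side collapses to $h(x^B) - h(x^0)$, which is at most $h(x^*) - h(x^0)$ by the boundedness assumption, while the right side becomes $\frac{\alpha(2-\alpha)}{2L}\sum_{k=0}^{B-1}\|\nabla h(x^k)\|_2^2$. Rearranging and lower-bounding the sum by $B$ times its smallest term gives
\[
\min_{k=0,\dots,B-1}\|\nabla h(x^k)\|_2^2 \;\leq\; \frac{2L\big(h(x^*)-h(x^0)\big)}{\alpha(2-\alpha)\,B} .
\]
Demanding that the right-hand side fall strictly below $\rho$ and solving for $B$ reproduces exactly the stated threshold $K_{f_\theta} = \frac{2L(\mathcal{L}(f_\theta(x^*),y)-\mathcal{L}(f_\theta(x^0),y))}{(2-\alpha)\alpha\rho}$, which closes the argument.

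Since every step is routine, I do not expect a genuine obstacle; the only points demanding care are selecting the correct (lower-bound) orientation of the smoothness inequality for an \emph{ascent} scheme rather than a descent one, and the index bookkeeping — the progress bound naturally controls the minimum over $k=0,\dots,B-1$, which I would reconcile with the stated range $k=1,\dots,B$ by the usual one-step shift, noting it leaves the threshold $K_{f_\theta}$ unchanged.
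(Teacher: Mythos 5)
Your proof is correct: it is the standard descent-lemma argument (smoothness lower bound, per-step gain $\frac{\alpha(2-\alpha)}{2L}\|\nabla h(x^k)\|_2^2$, telescoping against the upper bound $h(x^*)$), and your handling of the index shift from $k=0,\dots,B-1$ to $k=1,\dots,B$ is sound since re-telescoping over the shifted window yields the same threshold. The paper itself gives no proof of this statement --- it is restated as a classical result with only a citation to lecture notes --- so your argument simply supplies the intended textbook derivation.
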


Now, we develop a similar result to the one in Theorem \ref{theo:improved-robustness-ratio} for when the anti-adversary is solving \eqref{eq:anti_adv_layer} by a single gradient descent step.

\begin{theo}\label{theo:gd_antiadversary}
Let the assumptions of Theorem \ref{prop:gd_attacker} hold. Then, the anti-adversary classifier $g$ described in Equation \eqref{eq:anti_adv_layer}, where $\gamma$ is computed by a single gradient descent step with a learning rate $\nicefrac{\alpha_g}{L}$ where $\alpha_g = (1-c)\alpha$  with $c<1$, is more robust against the gradient ascent attack in Theorem \ref{prop:gd_attacker}. In particular, $\forall ~ c \leq 0$, the attacker fails to construct adversaries for $g$ (\ie $K_g = \infty$). Moreover, for $c \in (0,1)$, the improved robustness factor for $g$ is

\begin{equation}
    G(c) = \frac{K_g}{K_{f_\theta}} =  \frac{2-\alpha}{(2-\alpha c)c} > 1.
\end{equation}
\end{theo}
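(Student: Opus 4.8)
The plan is to mirror the structure of the proof of Theorem~\ref{theo:improved-robustness-ratio}, replacing the discrete STP direction choices by deterministic gradient steps. First I would write out the single anti-adversary update: assuming $\hat{y}=y$ and zero initialization, one gradient-descent step on Problem~\eqref{eq:anti_adv_layer} with learning rate $\nicefrac{\alpha_g}{L}$ gives $\gamma^k = -\frac{\alpha_g}{L}\nabla_x \mathcal{L}(f_\theta(x^k),y)$. The attacker of Theorem~\ref{prop:gd_attacker} performs gradient ascent, contributing $\delta^k = +\frac{\alpha}{L}\nabla_x \mathcal{L}(f_\theta(x^k),y)$. Superposing the two additively, exactly as in Eq.~\eqref{iterate}, the composite iterate becomes $x^{k+1} = x^k + \frac{\alpha-\alpha_g}{L}\nabla_x \mathcal{L}(f_\theta(x^k),y) = x^k + \frac{c\alpha}{L}\nabla_x \mathcal{L}(f_\theta(x^k),y)$, so the net dynamics is plain gradient ascent on the base loss $\mathcal{L}(f_\theta(\cdot),y)$ but with the \emph{effective} step parameter $c\alpha$ in place of $\alpha$. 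This reduction is the analogue of the ``effective learning rate'' $c\epsilon$ obtained in the STP setting.

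With the reduction in hand, the case analysis is immediate. For $c\leq 0$ the effective step $\nicefrac{c\alpha}{L}$ is non-positive, so each update moves along the negative gradient (or not at all when $c=0$); the loss is non-increasing and the attacker can never raise it to cross the decision boundary, giving $K_g=\infty$. For $c\in(0,1)$ we have $c\alpha\in(0,1)\subseteq(0,1]$, so the effective dynamics satisfy the hypotheses of Theorem~\ref{prop:gd_attacker}. Substituting $\alpha\mapsto c\alpha$ into its bound yields $K_g = \frac{2L(\mathcal{L}(f_\theta(x^*),y)-\mathcal{L}(f_\theta(x^0),y))}{(2-c\alpha)(c\alpha)\rho}$, and dividing by $K_{f_\theta}$ cancels the common factor $2L(\mathcal{L}(f_\theta(x^*),y)-\mathcal{L}(f_\theta(x^0),y))/\rho$, leaving precisely $G(c)=\frac{(2-\alpha)\alpha}{(2-c\alpha)(c\alpha)}=\frac{2-\alpha}{(2-\alpha c)c}$.

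It then remains to verify $G(c)>1$ on $(0,1)$, which I would establish by monotonicity. Writing $G(c)=\frac{2-\alpha}{2c-\alpha c^2}$ and differentiating gives $\frac{dG}{dc}=\frac{-2(2-\alpha)(1-\alpha c)}{(2c-\alpha c^2)^2}$, which is strictly negative for $c\in(0,1)$ and $\alpha\in(0,1]$ because $2-\alpha>0$ and $1-\alpha c>0$. Since $G(1)=\frac{2-\alpha}{2-\alpha}=1$ and $G$ is strictly decreasing, we conclude $G(c)>1$ for every $c\in(0,1)$, completing the argument.

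The step I expect to be the main obstacle is justifying the additive combination of the anti-adversary and attacker updates. In the genuine white-box setting the attacker differentiates through the composite $g(x)=f_\theta(x+\gamma(x))$, which introduces Jacobian-of-$\gamma$ (second-order) terms rather than a clean sum of two first-order steps. The clean reduction above relies on treating both gradient steps as evaluated at the same base point $x^k$ and superposing them linearly, exactly the modeling convention adopted for the STP case in Eq.~\eqref{iterate}. I would make this assumption explicit (equivalently, neglect the higher-order coupling term, which is legitimate to first order in the step sizes), as it is what collapses the composite dynamics into gradient ascent on $f_\theta$ with the shrunken step $c\alpha$ and thereby makes Theorem~\ref{prop:gd_attacker} directly applicable.
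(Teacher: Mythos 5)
Your proposal is correct and follows essentially the same route as the paper's own proof: the additive superposition of the two gradient steps yielding effective step $\nicefrac{c\alpha}{L}$, the case split on the sign of $c$, the substitution $\alpha\mapsto c\alpha$ into the bound of Theorem~\ref{prop:gd_attacker}, and the monotonicity-plus-limit argument for $G(c)>1$ (your derivative $\frac{-2(2-\alpha)(1-\alpha c)}{(2c-\alpha c^2)^2}$ is identical to the paper's). Your closing remark correctly identifies that the additive combination is a modeling convention that neglects the Jacobian of $\gamma$; the paper adopts this convention implicitly without comment, so making it explicit is a small improvement in rigor rather than a deviation.
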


Similarly to Theorem \ref{theo:improved-robustness-ratio}, $G$ is a strictly decreasing function that is lower bounded by 1. Also, the attacker will not be able to construct an adversary when $c \leq 0$ since $K_g = \infty$. On the other hand, if $c \in (0,1)$, then $g$ is more robust than $f_\theta$, and the improvement is captured in $G$.

\begin{proof}
As per Theorem \ref{theo:gd_antiadversary}, an adversary update has the following form:

\begin{align*}
    x^{k+1} &= x^{k} + \frac{\alpha}{L}\nabla_{x} \mathcal{L}(f_\theta(x^k), y) - \frac{\alpha_g}{L}\nabla_{x} \mathcal{L}(f_\theta(x^k), y) \\
    &= x^{k} + \frac{(\alpha - \alpha_g)}{L} \nabla_{x} \mathcal{L}(f_\theta(x^k), y)  \\ 
    &= x^{k} + \frac{c\alpha}{L} \nabla_{x} \mathcal{L}(f_\theta(x^k), y) 
\end{align*}
    Thus, if $c \leq 0$, then $\mathcal{L}(f_\theta(x^{k+1}), y) \leq \mathcal{L}(f_\theta(x^k), y)$. That is to say that, the attacker in each iteration is not maximizing  $\mathcal{L}(f_\theta(x), y)$, and hence fails at constructing an adversary $(\eg K_g = \infty)$. On the other hand, if $c\in(0,1)$, then the previous iteration is one gradient ascent step to maximize $\mathcal{L}(f_\theta(x^k), y)$ with a learning rate of $\nicefrac{c\alpha}{L}$. Therefore, based on Theorem \ref{prop:gd_attacker}, to guarantee that the adversary successfully attacks $g$ for with $\rho$ gradient precision, we need:
    \[
    B > \frac{2L(\mathcal L (f_\theta(x^*), y) - \mathcal{L}(f_\theta(x^0), y)}{(2-\alpha c)\alpha \rho c} = K_{g}.
    \]
    Note that to get the improved robustness factor $G$ we divide $\nicefrac{K_g}{K_{f_\theta}}$ observing that $G(c) > 1$ and is monotonically decreasing since
    \[\frac{dG(c)}{dc} = \frac{-(2-\alpha)(2-2\alpha c)}{(2c-\alpha c^2)^2} < 0,
    \]
    and that both $\alpha < 1$ and $c < 1$. Moreover, $\underset{c\rightarrow 1}{\lim} G(c) = 1$ concluding the proof.
\end{proof}
\newpage
\section{Extra Experiments}
\textbf{Decision Based Black-Box Attacks (DBAs).} We conduct experiments against a weaker type of black-box attacks, namely decision based attacks. In this setup, the attacker has access to the output class of the model. We conduct experiments against two attacks: Boundary Attack (BA) \cite{brendel2018decision} and QEBA \cite{cheng2018queryefficient} attack with $\epsilon = \nicefrac{8}{255}$. Due to the large number of queries that these attacks require, we were able to report the results on a 100 randomly selected samples from CIFAR10 on Table \ref{tb:decision-based}. We observe that our anti-adversary layer, when combined with AWP, provides a notable improvement against QEBA attack while not degrading the performance against BA. This extends the benefits of our anti-adversary layer to DBAs.

\begin{table}
\label{table:table1}
\centering
\caption{\textbf{Robustness against Boundary Attack:} We equip AWP with our anti-adversary layer and conduct DBAs.}
\centering
\begin{tabular}{c||c|cc}
\toprule 
\midrule
                        & Clean         & BA & QEBA               \\
\midrule
\text{AWP [41]}              & 88.0          & 86.0             & 85.0          \\
\text{ + Anti-Adv}      & 88.0          & 86.0             & \textbf{86.0}   \\
\midrule
\bottomrule
\end{tabular}\label{tb:decision-based}
\end{table}

\textbf{Experiments with $K=1$.} For completeness, we conduct both black-box (using Square attack) and white-box with the cheapest version of our Algorithm \ref{alg:anti_adv} where we set $K=1$. In this experiment, we vary the learning rate $\alpha \in \{\nicefrac{10}{255}, 0.15\}$ reporting the results on both CIFAR10 and CIFAR100. As shown in Tables \ref{tb:AutoAttack_cif10_app} and \ref{tb:AutoAttack_cif100_app}, our anti-adversary layer even with $K=1$ provides a remarkable improvement on network robustness against both black-box and white-box attacks. In particular, we improve the robust accuracy against the strongest black-box attack (shaded in grey), Square attack, by at least $5\%$ and $6\%$ on CIFAR10 and CIFAR100, respectively. This improvement extends to cover the white-box settings as well, where the improvement against AutoAttack is $13\%$ on CIFAR10 and $5\%$ on CIFAR100.

\begin{table*}[t]
\centering
\caption{\textbf{Equipping robustly trained models with Anti-Adv on CIFAR10 against black- and white-box attacks.} We report clean accuracy (\%) and robust accuracy against \textit{APGD}, \textit{ADLR}, \textit{FAB}, \textit{Square} and \textit{AutoAttack} where bold numbers correspond to highest accuracy in each experiment. The last column summarizes the improvement on the AutoAttack benchmark.}
\centering
\begin{tabular}{c||c|cccg|c|c}
\toprule 
\midrule
    & \text{Clean} & \text{APGD} & \text{ADLR}& \text{FAB}& \text{Square}& \text{AutoAttack} & \text{Improvement} \\
\midrule
\text{ImageNet-Pre}  &87.11&	57.65&	55.32&	55.69&	62.39&	55.31& - \\
\text{ \,\,  + Anti-Adv $(\alpha=\nicefrac{10}{255})$}  &87.11&	61.92&	59.06&	73.93&	69.01&	58.77  & 3.46 \\
\text{ + Anti-Adv $(\alpha=0.15)$}  &87.02&	\textbf{77.74}&	\textbf{75.09}&	\textbf{81.25}&	\textbf{75.72}&	\textbf{72.63}  & \textbf{17.32} \\
\midrule
\text{MART}  &87.50&	62.18&	56.80&	57.34&	64.87&	56.75 &-\\
\text{ \,\, + Anti-Adv $(\alpha=\nicefrac{10}{255})$}  &87.50	&70.98	&65.03	&77.15	&75.47	&64.51& 7.73 \\
\text{ + Anti-Adv $(\alpha=0.15)$}  &87.29	&\textbf{75.67}	&\textbf{72.90}	&\textbf{79.69}	&\textbf{70.00}	&\textbf{67.42}& \textbf{10.67} \\
\midrule
\text{HYDRA}  &88.98&	60.13&	57.66&	58.42&	65.01&	57.64 & -\\
\text{ \,\, + Anti-Adv $(\alpha=\nicefrac{10}{255})$}  &88.98	&71.84	&69.35	&83.72	&76.87	&68.98& 11.34 \\
\text{ + Anti-Adv $(\alpha=0.15)$}  &88.93	&\textbf{78.55}	&\textbf{78.27}	&\textbf{84.36}	&\textbf{75.98}	&\textbf{73.59}& \textbf{15.95} \\
\midrule
\text{AWP}  & 88.25& 63.81&	60.53&	60.98&	66.18&	60.53& - \\
\text{ \,\, + Anti-Adv $(\alpha=\nicefrac{10}{255})$}  &88.25	&70.86	&68.80	&82.06	&75.39	&68.57& 8.04 \\
\text{ + Anti-Adv $(\alpha=0.15)$}  &88.10	&\textbf{79.16}	&\textbf{78.52}	&\textbf{83.88}	&\textbf{76.00}	&\textbf{74.47}& \textbf{13.34} \\
\midrule
\bottomrule
\end{tabular}\label{tb:AutoAttack_cif10_app}
\end{table*}

\begin{table*}[t]
\centering
\caption{\textbf{Equipping robustly trained models with Anti-Adv on CIFAR100 against black- and white-box attacks.} Similarly to CIFAR10 experiments, our layer provides a sizable improvement to robustness without sacrificing clean accuracy.}
\centering
\begin{tabular}{c||c|cccg|c|c}
\toprule 
\midrule
    & \text{Clean} & \text{APGD} & \text{ADLR}& \text{FAB}& \text{Square}& \text{AutoAttack} & \text{Improvement} \\
\midrule
\text{ImageNet-Pre}  &59.37&	33.45&	29.03&	29.34&	34.55&	28.96 &-\\
\text{ \,\, + Anti-Adv $(\alpha=\nicefrac{10}{255})$}  &59.29	&34.91	&30.87	&39.46	&39.44	&30.61& 1.65 \\
\text{ + Anti-Adv $(\alpha=0.15)$}  &59.24	&\textbf{35.55}	&\textbf{31.56}	&\textbf{40.86}	&\textbf{40.76}	&\textbf{31.34}& \textbf{2.38} \\
\midrule
\text{AWP}  &60.38&	33.56&	29.16&	29.48&	34.66&	29.15&-\\
\text{ \,\, + Anti-Adv $(\alpha=\nicefrac{10}{255})$}  &60.38	&34.30	&30.17	&38.09	&36.88	&30.15& 1.00 \\
\text{ + Anti-Adv $(\alpha=0.15)$}  &60.38	&\textbf{39.16}	&\textbf{35.30}	&\textbf{47.18}	&\textbf{44.30}	&\textbf{34.88}& \textbf{5.73} \\
\midrule
\bottomrule
\end{tabular}\label{tb:AutoAttack_cif100_app}
\end{table*}

\textbf{Experiments with $K\in\{4,5\}$.} For completeness, we analyze the effect of enlarging the number of iterations $K$ used to solve Equation \eqref{eq:anti_adv_layer}. In Figure \ref{fig:Ablating_K_appendix}, we show the robust accuracy of AWP when combined with our anti-adversary layer when varying $K\in\{2, 3, 4, 5\}$ with $\alpha=0.15$. We observe that the larger the number of iterations used, the larger the robustness gains that our layer brings.

\begin{figure}
    \centering
    \includegraphics[width=0.5\linewidth]{ 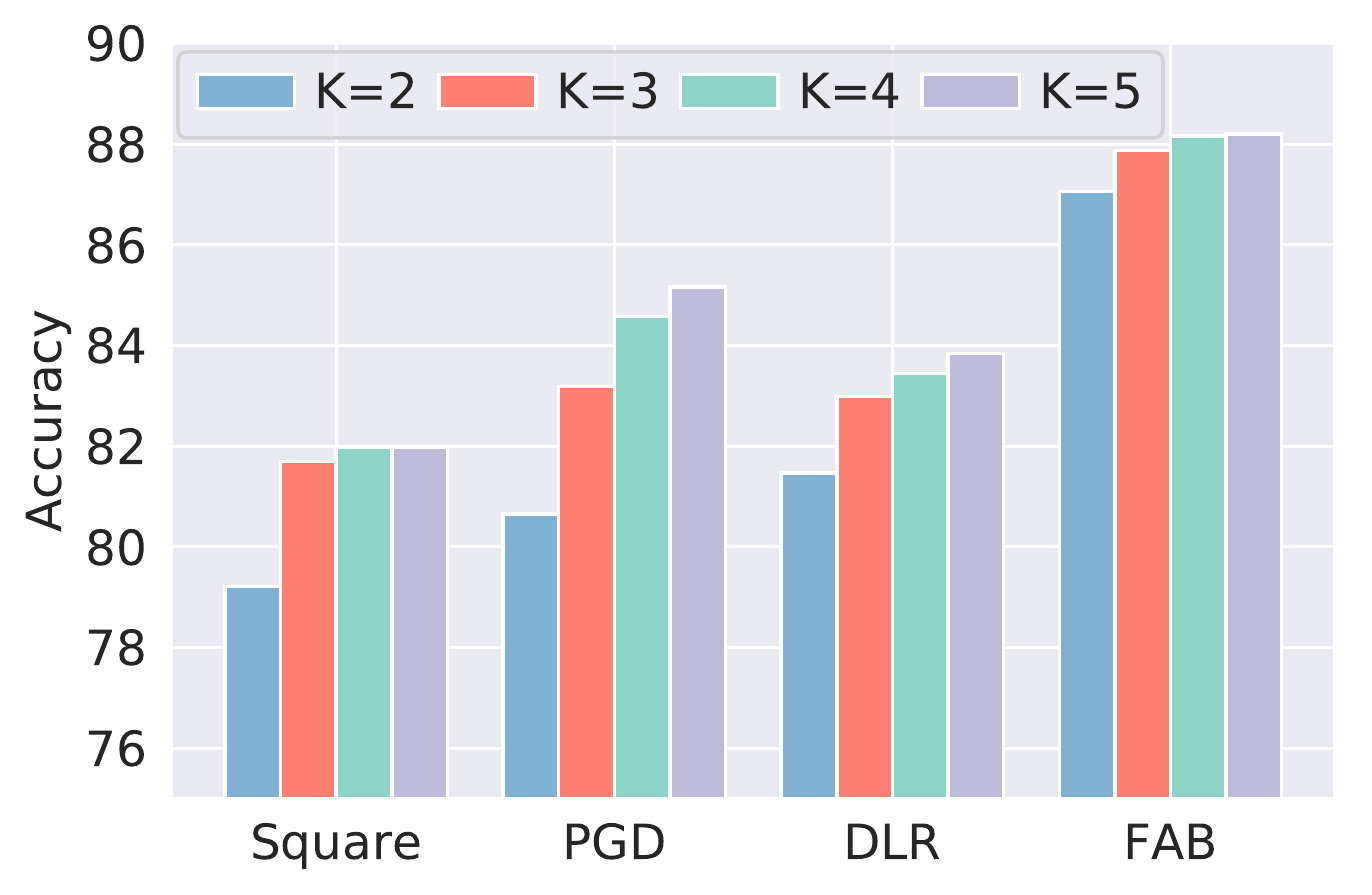}
    \caption{\textbf{Effect of varying $\boldsymbol K$ on robust accuracy for AWP+Anti-Adv on CIFAR10.} The better the solver for \eqref{eq:anti_adv_layer} is, the bigger the robustness gains that our layer provides.}
    \label{fig:Ablating_K_appendix}
\end{figure}

\section{Implementation Details}
\textbf{Nominally trained models.}
For CIFAR10 models, we trained ResNet18 from scratch for 90 epochs with SGD with an initial learning rate of 0.1, momentum of 0.9, and weight decay of $2\times 10^{-4}$. We multiply the step size by 0.1 after every 30 epochs. For ImageNet experiments, we used pretrained weights of ResNet50 from PyTorch \cite{pytorch_neurips}. For all robust models, we used provided weights by the respective authors. 

\textbf{Black-box attacks.} We used NES and Bandits from

\texttt{https://raw.githubusercontent.com/MadryLab/blackbox-bandits/master/src/main.py}, while we used the Square attack from the AutoAttack repo at \texttt{https://github.com/fra31/auto-attack}.

\textbf{White-box attacks.} We used APGD, ADLR, FAB and the worst case accuracy AutoAttack from the aforementioned AutoAttack repository.

\end{document}